\newtheorem{theorem}{Theorem}
\newtheorem{corollary}{Corollary}
\newtheorem{lemma}{Lemma}
\newtheorem{definition}{Definition}
\newtheorem{remark}{Remark}
\newcommand{\R}{\mathbb{R}}
\newcommand{\Rnonneg}{\mathbb{R}_{\ge 0}}
\newcommand{\Znonneg}{\mathbb{Z}_{\ge 0}}
\newcommand{\E}{\mathbb{E}}
\newcommand{\calD}{\mathcal{D}}
\newcommand{\calL}{\mathcal{L}}
\newcommand{\calN}{\mathcal{N}}
\newcommand{\calS}{\mathcal{S}}
\newcommand{\calT}{\mathcal{T}}
\newcommand{\calThat}{\widehat{\calT}}
\newcommand{\calX}{\mathcal{X}}
\newcommand{\ellhat}{\hat{\ell}}
\newcommand{\phat}{\hat{p}}
\newcommand{\rhat}{\hat{r}}
\newcommand{\what}{\hat{w}}
\newcommand{\wtrue}{\bar{w}}
\newcommand{\Xhat}{\hat{X}}
\newcommand{\Atilde}{\tilde{A}}
\newcommand{\Btilde}{\tilde{B}}
\newcommand{\calDtilde}{\calD}
\newcommand{\Gtilde}{\tilde{G}}
\newcommand{\Wtilde}{\tilde{W}}
\newcommand{\wtilde}{\tilde{w}}
\newcommand{\Xtilde}{\tilde{X}}
\newcommand{\Sigmatilde}{\tilde{\Sigma}}
\newcommand{\avec}{\vec{a}}
\newcommand{\gvec}{\vec{g}}
\newcommand{\hvec}{\vec{h}}
\newcommand{\zvec}{\vec{z}}
\newcommand{\LS}{\operatorname{LS}}
\newcommand{\ML}{\operatorname{ML}}
\newcommand{\RDPG}{\operatorname{RDPG}}
\newcommand{\Bernoulli}{\operatorname{Bernoulli}}
\newcommand{\supp}{\operatorname{supp}}
\newcommand{\onevec}{\mathbf{\vec{1}}}
\newcommand{\UA}{U_A}
\newcommand{\UP}{U_P}
\newcommand{\SA}{S_A}
\newcommand{\SP}{S_P}
\newcommand{\UL}{U_L}
\newcommand{\rls}{r_{\LS}}
\newcommand{\rhatls}{\rhat_{\LS}}
\newcommand{\wls}{w_{\LS}}
\newcommand{\whatls}{\what_{\LS}}
\newcommand{\whatml}{\what_{\ML}}
\newcommand{\thetals}{\theta_{\LS}}
\newcommand{\nuls}{\nu_{\LS}}
\newcommand{\cfw}{c_{F,\wtrue}}
\newcommand{\lambdamin}{\lambda_{\min}}
\newcommand{\inlaw}{\xrightarrow{\calL}}
\newcommand{\inprob}{\xrightarrow{P}}
\begin{document}

\title{Out-of-sample extension of graph adjacency spectral embedding}
\author[1]{Keith Levin}
\author[2,3]{Farbod Roosta-Khorasani}
\author[3,4]{Michael W. Mahoney}
\author[5]{Carey E. Priebe}

\affil[1]{\small Department of Statistics, University of Michigan, USA.}
\affil[2]{\small School of Mathematics and Physics, University of Queensland, Australia.}
\affil[3]{\small International Computer Science Institute, Berkeley, USA.}
\affil[4]{\small Department of Statistics, University of California at Berkeley, USA.}
\affil[5]{\small Department of Applied Mathematics and Statistics, Johns
    Hopkins University}
\maketitle


\begin{abstract}
Many popular dimensionality reduction procedures have out-of-sample extensions, which allow a practitioner to apply a learned embedding to observations not seen in the initial training sample. In this work, we consider the problem of obtaining an out-of-sample extension for the adjacency spectral embedding, a procedure for embedding the vertices of a graph into Euclidean space. We present two different approaches to this problem, one based on a least-squares objective and the other based on a maximum-likelihood formulation. We show that if the graph of interest is drawn according to a certain latent position model called a random dot product graph, then both of these out-of-sample extensions estimate the true latent position of the out-of-sample vertex with the same error rate. Further, we prove a central limit theorem for the least-squares-based extension, showing that the estimate is asymptotically normal about the truth in the large-graph limit.
\end{abstract}

\section{Introduction}
Given a graph $G = (V,E)$ on $n$ vertices
with adjacency matrix $A \in \{0,1\}^{n \times n}$,
the problem of graph embedding
is to map the vertices of $G$ to some $d$-dimensional vector space $\calS$
in such a way that geometry in $\calS$ reflects the topology of $G$.
For example, we may ask that vertices with high conductance in $G$
be assigned to nearby vectors in $\calS$.
This is a special case of the problem of dimensionality reduction,
well-studied in machine learning and related disciplines
\citep{van2009dimensionality}.
When applied to graph data, each vertex in $G$
is described by an $n$-dimensional binary vector,
namely its corresponding column (or row) in adjacency matrix
$A \in \{0,1\}^{n \times n}$, and we wish to associate with each vertex
$v \in V$ a lower-dimensional representation, say $x_v \in \calS$.
The two most commonly-used approaches for graph embeddings are
the graph Laplacian embedding and its variants
\citep{BelNiy2003,CoiLaf2006} and
the adjacency spectral embedding \citep[ASE,][]{SusTanFisPri2012}.
Both of these embedding procedures produce low-dimensional
representations of the vertices in a graph $G$,
and the decision as to which embedding is preferable
is dependent on the downstream task.
Indeed, one can show that neither embedding dominates the other for the purposes of vertex classification; see, for example, Section 4.3 of \citet{TanPri2016}.
In addition, the results in Section 4.3
of \citet{TanPri2016} suggest that ASE performs
better than the Laplacian eigenmaps embedding
for graphs that exhibit a core-periphery structure.
Such structures are ubiquitous in real networks,
such as those arising in social and biological sciences
\citep{JeuETAL2015,LesLanDasMah2009}.

The ASE and Laplacian embedding differ in that the latter has received far more attention, especially with respect to questions of limit objects \citep{HeiAudvon2005} and out-of-sample extensions \citep{BengioETAL2003}. The aim of this paper is to establish theoretical foundations for the latter
of these two problems in the case of the adjacency spectral embedding.

\section{Background and Notation}

In the standard out-of-sample (OOS) extension,
we are presented with training data
$\calD = \{z_1,z_2,\dots,z_n\} \subseteq \calX$,
where $\calX$ is the set of possible observations.
The data $\calD$ give rise to a symmetric matrix
$M = [ K(z_i,z_j) ] \in \R^{n \times n}$,
where $K: \calX \times \calX \rightarrow \Rnonneg$ is a kernel function
that measures similarity between elements of $\calX$, so that
$K(y,z)$ is large if $y,z \in \calX$ are similar, and is small otherwise.
Suppose that we have computed an embedding of the data $\calD$.
Let us denote this embedding by $X \in \R^{n \times d}$,
so that the embedding of $z_i \in \calD$ is given by the $i$-th row of $X$.
Suppose that we are given an additional observation $z \in \calX$,
not necessarily included in $\calD$,
and we wish to embed $z$ under the same scheme as was used to produce $X$.
A na\"ive approach would be to discard the old embedding $ X $, consider the augmented collection
$\calDtilde = \calD \cup \{z\}$ and construct
a new embedding $\Xtilde \in \R^{(n+1)\times d}$.
However, in many applications, it is infeasible to compute this
embedding again from scratch, either because of computational constraints
or because the similarities $\{ K(z_i,z_j) : z_i,z_j \in \calD \}$
may no longer be available after $X$ has been computed.
Thus, the OOS problem is to embed
$z$ using only the available embedding $X$ which was initially learned from $\calD$
and the similarities $\{ K(z_i,z) \}_{i=1}^n$.

As an example, consider the Laplacian eigenmaps embedding
\citep{BelNiy2003,BelNiySin2006}.
Given a graph $G = (V,E)$ with adjacency matrix $A \in \R^{n \times n}$, the
$d$-dimensional \emph{normalized Laplacian} of $G$ is the matrix
$L = D^{-1/2} A D^{-1/2}$, where $ D \in \mathbb{R}^{n \times n} $ is the diagonal degree matrix, i.e., $ d_{ii} = \sum_{j} A_{ij}$ is the degree of the vertex $ i $~\citep{von2007tutorial,vishnoi2013lx}. The $d$-dimensional \emph{normalized Laplacian eigenmaps embedding}
of $G$ is given by the rows of the matrix
$\UL \in \R^{n \times d}$,
whose columns are the $d$ orthonormal eigenvectors
corresponding to the top $d$ eigenvalues of $L$,
excepting the trivial eigenvalue $1$.
We note that some authors \citep[see, for example,][]{Chung1997}
use $I - D^{-1/2} A D^{-1/2}$ to be the normalized graph Laplacian, but since this matrix has the same eigenspace
as our $L$, results concerning the eigenvectors of either
of these matrices are equivalent.
Suppose that a vertex $v$ is added to graph 
$G$, to form graph $\Gtilde$ with adjacency matrix
\begin{align} 
\label{eq:def:Atilde}
\Atilde = \begin{bmatrix} A & \avec \\
\avec^T & 0 \end{bmatrix},
\end{align}
where $\avec \in \{0,1\}^n$.
A na\"ive approach to embedding $\Gtilde$ would be to compute the
top eigenvectors of the graph Laplacian of $\Gtilde$ as before.
However, the OOS extension problem
requires that we only use the information available in
$\UL$ and $\avec$ to compute an embedding of the new vertex $v$.

\citet{BengioETAL2003} presented out-of-sample extensions for
multidimensional scaling \citep[MDS,][]{Torgerson1952,BorGro2005},
spectral clustering \citep{Weiss1999,NgJorWei2002},
Laplacian eigenmaps \citep{BelNiy2003}
and ISOMAP \citep{TendeSLan2000}.
These OOS extensions were based on a least-squares
formulation of the embedding problem, arising from the fact that the
in-sample embeddings are given by functions of the eigenvalues
and eigenfunctions.
\citet{TroPri2008} considered a different OOS extension for MDS.
Rather than following the approach of \citet{BengioETAL2003},
\citet{TroPri2008} cast the MDS OOS extension as a
simple modification of the in-sample MDS optimization problem.

Let $\{(\lambda_{t}, v_t)\}_{t=1}^{n}$ be the eigen-pairs of the matrix $ M $, constructed from some suitably-chosen similarity function, $ K $, defined on pairs of observations in $ \calD \times \calD $. In general, OOS extensions for eigenvector-based embeddings can be derived as in \citet{BengioETAL2003}
as the solution of a least-squares problem
$$ \min_{ f(x) \in \R^d }
\sum_{i=1}^n \left( K(x,x_i) - \frac{1}{n} \sum_{t=1}^d \lambda_{t} f_t(x_i) f_t(x) \right)^2, $$
where $\{x_i\}_{i=1}^n$ are the in-sample observations, and $f_t(x_i) = [v_t]_{i} $ is $ i^{th}$ component of $ v_{t} $.
\citet{BelNiySin2006} presented a slightly different approach
that incorporates regularization in
both the intrinsic geometry of the data distribution
and the geometry of the similarity function $ K $.
Their approach applies to Laplacian eigenmaps
as well as to regularized least squares and SVM.
The authors also introduced a Laplacian SVM,
in which a Laplacian penalty term is added to the 
standard SVM objective function.
\citet{BelNiySin2006} showed that all of these embeddings have
OOS extensions that arise as the solution of a
generalized eigenvalue problem.
We refer the interested reader to \citet{LevJanVan2015}
for a practical application of this OOS extension.
More recent approaches to OOS extension have 
avoided altogether the need to solve a least squares
or eigenvalue problem
by, instead, training a neural net to learn the embedding
directly \citep[see, for example,][]{QuiPetHeu2016,JanSelLyz2017}.

The only existing work to date on the ASE OOS extension
of which we are aware appears in \citet{TanParPri2013}.
The authors considered the OOS extension for ASE applied
to \emph{latent position graphs}
\citep[see, for example][]{HofRafHan2002},
in which each vertex is associated with an element of a vector
space and edge probabilities are given by a suitably-chosen inner product.
The authors introduced a least-squares OOS extension for embeddings
of latent position graphs and proved a theorem, analogous to our
Theorem~\ref{thm:lsrate}, for the error of this
extension about the true latent position.
Theorem~\ref{thm:lsrate} simplifies the proof of the result due to
\citet{TanParPri2013} for the case of random dot product graphs
(see Definition~\ref{def:RDPG} below).

Of crucial importance in assessing OOS extensions,
but largely missing from the existing literature,
is an investigation of how the OOS estimate
compares with the in-sample embedding.
That is, for an out-of-sample observation $z \in \calX$, how well does its OOS embedding $\Xhat_z \in \R^d$, approximate the embedding that would be obtained
by considering the full sample $\calDtilde = \calD \cup \{z\}$?
In this paper, we address this question in the context of
the adjacency spectral embedding. In particular, we show in our main results, Theorems~\ref{thm:lsrate}
and~\ref{thm:mlrate}, that two different approaches to the ASE OOS extension
recover the in-sample embedding at
a rate that is, in a certain sense, optimal
(see the discussion at the end of Section~\ref{sec:theory}).
We conjecture that analogous rate results can be obtained for other
OOS extensions such as those presented in~\citet{BengioETAL2003}.

\subsection{Notation}

We pause briefly to establish notational conventions for this paper.
For a matrix $B \in \R^{n_1 \times n_2}$, we let $\sigma_i(B)$ denote
the $i$-th singular value of $B$, so that
$\sigma_1(B) \ge \sigma_2(B) \ge \dots \ge \sigma_k(B) \ge 0$,
where $k = \min\{n_1,n_2\}$.
For positive integer $n$, we let $[n] = \{1,2,\dots,n\}$.
Throughout this paper, $n$ will index the number of vertices 
in a hollow graph $G$, the observed data,
and we let $c > 0$ denote a positive constant,
not depending on $n$, whose value may change from line to line.
For an event $E$, we let $E^c$ denote its complement.
We will say that event $E_n$, indexed so as to depend on $n$, occurs
\emph{with high probability}, and write $E_n$ w.h.p.\ ,
if for some constant $\epsilon > 0$,
it holds for all suitably large $n$ that
$\Pr[E_n^c] \le n^{-(1+\epsilon)}$.
We say that event $E_n$ occurs \emph{almost surely almost always},
and write $E_n$ a.s.a.a.\ to mean that
with probability $1$, there exists a finite $n_0$ such that
$E_n$ occurs for all $n \ge n_0$, i.e., $\Pr(\liminf_{n} E_{n}) = \Pr( \bigcup_{n=1}^{\infty} \bigcap_{k = n}^{\infty} E_{k}) = 1$.
We note that under these definitions, $E_n$ w.h.p.\
implies $E_n$ a.s.a.a.\ by the Borel-Cantelli Lemma.
In this paper, we will show $\Pr[E^c] \le cn^{-2}$ any time we wish
to show that event $E$ occurs with high probability.
For a function $f : \Znonneg \rightarrow \Rnonneg$
and a sequence of random variables $\{ Z_n \}$, we will write
$Z_n = O( f(n) )$ if there exists a constant $C$
and a number $n_0$ such that
$Z_n \le C f(n)$ for all $n \ge n_0$,
and write $Z_n = O( f(n) )$  a.s.\ if
the event $Z_n \le C f(n)$ occurs a.s.a.a.
For a vector $x \in \R^d$, we use the unadorned norm $\| x \|$ to denote
the Euclidean norm of $x$, and $\| x \|_\infty$ to denote the
supremum norm $\| x \|_\infty = \max_{i \in [d]} |x_i|$.
vector $x \in \R^d$, and to denote the operator norm
For a matrix $M \in \R^{n \times d}$, we use the unadorned norm $\| M \|$
to denote the operator norm
$$ \| M \| = \max_{x \in \R^d : \| x \| = 1} \| M x \| $$
and we use $\| \cdot \|_{2 \rightarrow \infty}$ to denote the matrix
operator norm
$$ \| M \|_{2 \rightarrow \infty} = \max_{x : \| x \| = 1} \| Mx \|_\infty
	= \max_{i \in [n]} \| M_{i,\cdot} \|, $$
which can be proven via the Cauchy-Schwarz inequality \citep{HorJoh2013}.
This latter operator norm will be especially useful for us,
in that a bound on $\| M \|_{2 \rightarrow \infty}$ gives a uniform
bound on the rows of matrix $M$.

\subsection{Roadmap}
The remainder of this paper is structured as follows.
In Section~\ref{sec:OOS}, we present two OOS extensions of the ASE.
In Section~\ref{sec:theory}, we prove convergence of these two OOS
extensions when applied to random dot product graphs.
In Section~\ref{sec:expts}, we explore the empirical performance of the
two extensions presented in Section~\ref{sec:OOS},
and we conclude with a brief discussion in Section~\ref{sec:conclusion}.

\section{Out-of-sample Embedding for ASE} \label{sec:OOS}

Given a graph $G$ encoded by adjacency matrix $A \in \{0,1\}^{n \times n}$,
the adjacency spectral embedding (ASE)
produces a $d$-dimensional embedding of the vertices of $G$,
given by the rows of the $n$-by-$d$ matrix
\begin{equation} \label{eq:def:ase}
	\Xhat = \UA \SA^{1/2},
\end{equation}
where $\UA \in \R^{n \times d}$ is a matrix with orthonormal columns
given by the $d$ eigenvectors corresponding to the top $d$ eigenvalues
of $A$, which we collect in the diagonal matrix $\SA \in \R^{d \times d}$.
We note that in general, one would be better-suited
to consider the matrix $[A^{T}A]^{1/2} $, so that all
eigenvalues are guaranteed to be nonnegative,
but we will see that in the random dot product graph,
the model that is the focus of this paper,
the top $d$ eigenvalues of $A$ are positive
with high probability
(see Lemma~\ref{lem:Psvals} below,
or see
either Lemma 1 in \citet{AthLyzMarPriSusTan2016}
or Observation 2 in \citet{LevAthTanLyzPri2017}.

The random dot product graph \citep[RDPG,][]{YouSch2007} is an
edge-independent random graph model in which the graph structure
arises from the geometry of a set of \emph{latent positions}, i.e., 
vectors associated to the vertices of the graph.
As such, the adjacency spectral embedding is particularly
well-suited to this model.
\begin{definition}\label{def:RDPG}
	\emph{(Random Dot Product Graph)}
	Let $F$ be a distribution on $\R^d$ such that
	$x^T y \in [0,1]$ whenever $x,y \in \supp F$,
	and let $X_1,X_2,\dots,X_n$ be drawn i.i.d. from $F$.
	Collect these $n$ random points
	in the rows of a matrix $X \in \R^{n \times d}$.
	Suppose that (symmetric) adjacency matrix $A \in \{0,1\}^{n \times n}$
	is distributed in such a way that
	\begin{align} \label{eq:rdpg}
		\Pr[ A | X ]=
		\prod_{1 \le i<j \le n}(X_i^TX_j)^{A_{ij}}(1-X_i^TX_j)^{1-A_{ij}}.
	\end{align}
	When this is the case, we write $(A,X) \sim \RDPG(F,n)$.
	If $G$ is the random graph corresponding to adjacency matrix $A$,
	we say that $G$ is a \emph{random dot product graph}
	with {\em latent positions} $X_1,X_2,\dots,X_n$, where $X_i$ is the latent position corresponding to the $i$-th vertex.
\end{definition}
A number of results exist showing that the adjacency spectral embedding
yields consistent estimates of the latent positions
in a random dot product graph \citep{SusTanFisPri2012,TanSusPri2013}
and recovers community structure in the
stochastic block model \citep{LyzSusTanAthPri2014}.
We note an inherent nonidentifiability in the random dot product graph,
arising from the fact that for any orthogonal matrix $W \in \R^{d \times d}$,
the latent positions $X \in \R^{n \times d}$ and $XW \in \R^{d \times d}$
give rise to the same distribution over graphs,
since $XX^T = (XW)(XW)^T = \E[ A \mid X ]$.
Owing to this nonidentifiability, we can only hope to recover the latent
positions in $X$ up to some orthogonal rotation.

Suppose that, given adjacency matrix $A$, we compute embedding
$$ \Xhat = [ \Xhat_1 \Xhat_2 \dots \Xhat_n ]^T, $$
where $\Xhat_i \in \R^d$ denotes the embedding of the $i$-th vertex.
Now suppose we add a vertex $v$ with latent position
$\wtrue \in \R^d$ to the original graph $G$,
obtaining an augmented graph $\Gtilde = ( [n] \cup \{v\}, E \cup E_v )$,
where $E_v$ denotes the set of edges between $v$ and the vertices of $G$.
One would like to embed vertex $v$ according to the same distribution
as the original $n$ vertices and obtain an estimate of $\wtrue$.
Let the binary vector $\avec \in \{0,1\}^n$
encode the edges $E_v$ incident upon vertex $v$,
with entries $a_i = (\avec)_i \sim \Bernoulli( X_i^T \wtrue )$.
The augmented graph $\Gtilde$ then has the adjacency matrix as in~\eqref{eq:def:Atilde}. As discussed earlier,
the natural approach to embedding vertex $v$ is to simply re-embed the
whole matrix $\Gtilde$ by computing the ASE of $\Atilde$.
Suppose that we wish to avoid such a computation,
for example due to resource constraints.
The problem then becomes one of embedding the new vertex $v$
based solely on the information present in $\Xhat$ and $\avec$.
Two natural approaches to such an OOS extension
suggest themselves.

\subsection{Linear Least Squares OOS}
A natural approach to OOS embedding, pursued by, for example,
\citet{BengioETAL2003}, is to embed vertex $v$
as the least-squares solution to $\Xhat w = \avec$.
That is, we embed the vertex $v$ as the vector $\whatls$ solving
\begin{equation} \label{eq:def:llshat}
	\min_{w \in \R^d} \sum_{i=1}^n \left( a_i - \Xhat_i^T w \right)^2,
\end{equation}
where $a_i$ denotes the $i$-th component of the binary vector
$\avec$ encoding the edges between $v$ and the original $n$ vertices. We will denote the solution to
the least-squares optimization in
Equation~\eqref{eq:def:llshat}
by $\whatls$, and term this the
{\em linear least squares out-of-sample} (LLS OOS) embedding.

\subsection{Maximum Likelihood OOS}
A more principled approach to OOS extension,
but perhaps more involved computationally,
is to consider the following maximum-likelihood formulation.
The entries of the vector $\avec$ are distributed
independently as
$a_i \sim \Bernoulli( X_i^T \wtrue )$, where $\wtrue$ denotes the
true latent position of OOS vertex $v$.
Since we do not have access to the latent positions
$\{ X_i \}_{i=1}^n$, we use instead their
estimates $\{ \Xhat_i \}_{i=1}^n$.
This yields the following objective:
\begin{equation} \label{eq:def:eigenlikhatunbound}
	\max_{w \in \R^d} \sum_{i=1}^n a_i \log \Xhat_i^T w 
	+ (1-a_i)\log \left( 1 - \Xhat_i^T w \right).
\end{equation}
Unfortunately, this optimization problem may fail to
achieve its optimum inside the support of $F$.
Indeed, it may not even have a finite solution.
Thus, we will instead settle for solving the following constrained modification
of Equation~\eqref{eq:def:eigenlikhatunbound},
\begin{equation} \label{eq:def:eigenlikhat}
	\max_{w \in \calThat_\epsilon}
	\sum_{i=1}^n a_i \log \Xhat_i^T w 
	+ (1-a_i)\log \left( 1 - \Xhat_i^T w \right),
\end{equation}
where $\calThat_\epsilon =
\{ w \in \R^d : \epsilon \le \Xhat_i^T w \le 1-\epsilon, i \in [n] \}$,
and $\epsilon > 0$ is a small constant.
We note that this is based only on the edges incident on the OOS vertex rather than on the
full data $\Atilde$,
and uses the spectral estimates $\{\Xhat_i \}_{i=1}^n$
rather than the true latent positions
$\{ X_i \}_{i=1}^n$.
Despite both of these facts,
we will term the extension given by
Equation~\eqref{eq:def:eigenlikhat} as the
{\em maximum-likelihood out-of-sample} (ML OOS) extension,
and we will let $\whatml$ denote its solution.

\section{Main Results} \label{sec:theory}

Our main results show that both the linear least-squares and maximum-likelihood
OOS extensions in Equations \eqref{eq:def:llshat}
and \eqref{eq:def:eigenlikhat}
recover the true latent position $\wtrue$ of $v$.
Further, both OOS extensions converge to $\wtrue$
at the same asymptotic rate (i.e., up to a constant)
as we would have obtained, had
we computed the ASE of $\Atilde$ in~\eqref{eq:def:Atilde} directly.
This rate is given by Lemma 2.5 from \citet{LyzSusTanAthPri2014},
which we state here in a slightly adapted form.
The lemma states, in essence, that the ASE recovers the latent positions
with error of order $n^{-1/2} \log n$, uniformly over the $n$ vertices.
We remind the reader that $\| M \|_{2 \rightarrow \infty}$ denotes
the $2$-to-$\infty$ operator norm,
$ \| M \|_{2 \rightarrow \infty} = \max_{x : \|x\| =1} \| Mx \|_{\infty}$.

\begin{lemma}[Adapted from \citet{LyzSusTanAthPri2014}, Lemma 2.5]
	\label{lem:perfect}
	Let $X = [X_1,X_2,\dots,X_n]^T \in \R^{n \times d}$
	be the matrix of latent positions of an RDPG,
	and let $\Xhat \in \R^{n \times d}$ denote the
	matrix of estimated latent positions yielded by ASE
	as in \eqref{eq:def:ase}.
	Then with probability at least $1-cn^{-2}$,
	there exists orthogonal matrix $W \in \R^{d \times d}$ such that
	$$ \| \Xhat - XW \|_{2 \rightarrow \infty} \le \frac{ c \log n }{ n^{1/2} }. $$
	That is, it holds with high probability that
	for all $i \in [n]$,
	$$ \| \Xhat_i - W^T X_i \| \le \frac{ c \log n }{ n^{1/2} }. $$
\end{lemma}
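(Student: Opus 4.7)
The plan is a two-stage bootstrap: first an operator-norm eigenvector bound via Davis--Kahan, then an upgrade to the $2 \to \infty$ norm using the eigenvector identity $\UA = A\UA \SA^{-1}$, and finally a reduction from $\Xhat = \UA \SA^{1/2}$. Set $P = XX^T$ with rank-$d$ eigendecomposition $P = \UP \SP \UP^T$, which is well-defined w.h.p.\ because $n^{-1} X^T X$ concentrates about $\E[X_1 X_1^T]$, a positive-definite $d \times d$ matrix. By the SVD of $X$ one may write $X = \UP \SP^{1/2} W'$ for some orthogonal $W' \in \R^{d \times d}$, so taking $W = (W')^T W_*^T$ for an appropriate orthogonal $W_*$ reduces the claim to a $2 \to \infty$ bound on $\UA \SA^{1/2} - \UP W_* \SP^{1/2}$.

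The first step is standard random matrix concentration. Matrix Bernstein applied to the symmetric matrix $A - P$ (whose above-diagonal entries are independent, bounded, mean-zero with variance at most $1/4$) gives $\|A - P\| \le c\sqrt{n \log n}$ w.h.p. The law of large numbers on $n^{-1} X^T X$ yields $\lambda_d(P) \ge cn$ w.h.p., and combined with Weyl's inequality this guarantees that the top $d$ eigenvalues of $A$ are positive and of order $n$. The Davis--Kahan $\sin\Theta$ theorem then produces an orthogonal $W_* \in \R^{d \times d}$ with $\|\UA - \UP W_*\| = O(\sqrt{\log n / n})$ w.h.p.

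The hard part is the $2 \to \infty$ upgrade, since translating the spectral bound directly through $\SA^{1/2} = O(\sqrt n)$ would give only $\|\Xhat - XW\|_{2\to\infty} = O(\sqrt{\log n})$, far too weak. Using $\UA = A\UA \SA^{-1}$ and $P\UP = \UP \SP$, I decompose
\begin{align*}
\UA - \UP W_*
&= (A - P)\UP W_* \SA^{-1} + (A - P)(\UA - \UP W_*)\SA^{-1} \\
&\quad + \UP(\SP \UP^T \UA \SA^{-1} - W_*).
\end{align*}
The leading term is the crucial one: its $i$-th row is $(A - P)_{i,\cdot} \UP W_* \SA^{-1}$, and since $\|\UP\|_{2\to\infty} = O(n^{-1/2})$ (because $\UP = X W'^T \SP^{-1/2}$ with bounded rows of $X$), Hoeffding plus a union bound over $i \in [n]$ makes the $i$-th row of $(A - P)\UP$ have Euclidean norm $\tilde O(1)$ w.h.p.; applying $\SA^{-1} = O(n^{-1})$ yields a row-norm of order $\log n / n$. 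The middle remainder, bounded by $\|A - P\|_{2\to\infty} \|\UA - \UP W_*\| \|\SA^{-1}\|$, is of the same order once the trivial $\|A - P\|_{2\to\infty} = O(\sqrt n)$ is combined with the previous step. The third term is controlled by combining Weyl with Davis--Kahan to bound $\SP \UP^T \UA \SA^{-1} - W_*$. Collecting, $\|\UA - \UP W_*\|_{2\to\infty} = O(\log n / n)$ w.h.p.

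To conclude, write
$$ \Xhat - XW = (\UA - \UP W_*)\SA^{1/2} + \UP W_*(\SA^{1/2} - W_*^T \SP^{1/2} W_*). $$
The first summand has $2 \to \infty$ norm $O(\log n/n) \cdot O(\sqrt n) = O(n^{-1/2}\log n)$. For the second, matrix square-root perturbation gives $\|\SA^{1/2} - W_*^T \SP^{1/2} W_*\| = O(1)$ (since both sets of eigenvalues are $\Theta(n)$ and $\|\SA - W_*^T \SP W_*\| = O(\sqrt n)$ by Weyl), while $\|\UP\|_{2\to\infty} = O(n^{-1/2})$, so the second summand contributes only $O(n^{-1/2})$. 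This yields the stated bound. The essential cancellation behind the hardest step --- the $2 \to \infty$ eigenvector bound --- comes from a row of $(A - P)\UP$ being an average of $n$ bounded mean-zero independent scalars per column of $\UP$, producing the extra $\sqrt n$ savings over the naive Davis--Kahan estimate.
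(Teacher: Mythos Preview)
The paper does not supply its own proof of this lemma: it is quoted verbatim (in adapted form) from \citet{LyzSusTanAthPri2014}, Lemma~2.5, and is used throughout as a black box. So there is no ``paper's proof'' to compare against; one can only ask whether your sketch reproduces the argument of the cited reference.

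It does, in outline. The strategy you describe---matrix Bernstein for $\|A-P\|$, Davis--Kahan for a spectral-norm eigenvector bound, then a bootstrap to the $2\to\infty$ norm via the identity $\UA = A\UA \SA^{-1}$, and finally passing from $\UA$ to $\Xhat = \UA \SA^{1/2}$---is exactly the machinery developed in \citet{LyzSusTanAthPri2014} and later refined in the $2\to\infty$ perturbation literature. Your three-term decomposition of $\UA - \UP W_*$ is correct, and the row-wise Hoeffding argument on $(A-P)\UP$ is the heart of the matter and is handled properly.

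Two small points of sloppiness are worth tightening. First, your choice $W = (W')^T W_*^T$ does not match the decomposition you write for $\Xhat - XW$; expanding the latter gives $XW = \UP \SP^{1/2} W_*$, which requires $W = (W')^T W_*$. Second, the claim ``$\|\SA - W_*^T \SP W_*\| = O(\sqrt n)$ by Weyl'' is not right as stated: Weyl controls $\|\SA - \SP\|$, but conjugation by $W_*$ is only harmless once you know $W_*$ approximately commutes with the eigenvalue matrices. That commutation is precisely the content of results like Lemma~\ref{lem:approxcommute} in the present paper, and it follows from the identity $\UP^T(A-P)\UA = \UP^T\UA\,\SA - \SP\,\UP^T\UA$ together with $\|\UP^T\UA - W_*\| = O(\sqrt{\log n / n})$. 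With that insertion the square-root perturbation step goes through (yielding $O(\sqrt{\log n})$ rather than $O(1)$, but still negligible after multiplication by $\|\UP\|_{2\to\infty}$). Neither issue affects the final rate.
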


In what follows, we let
$A \in \{0,1\}^{n \times n}$ denote the random adjacency matrix
of an RDPG $G$, and let $X_1,X_2,\dots,X_n \in \R^d$ denote its latent
positions, collected in matrix
$X = [X_1,X_2,\dots,X_n]^T \in \R^{n \times d}$.
That is, $(A,X) \sim \RDPG(F,n)$.
We use
$\Xhat = [\Xhat_1,\Xhat_2,\dots,\Xhat_n]^T \in \R^{n \times d}$
to denote
the matrix whose rows are the estimated latent positions,
obtained via ASE as in \eqref{eq:def:ase}.
We let $\wtrue$ denote the true latent position
of the OOS vertex $v$.

\begin{theorem} \label{thm:lsrate}
	With notation as above, let $\whatls$ denote the least-squares estimate
	of $\wtrue$, i.e., the solution to \eqref{eq:def:llshat}.
	Then there exists an orthogonal matrix $W \in \R^{d \times d}$
	such that with high probability,
	$$ \| W \whatls - \wtrue \| \le c n^{-1/2} \log n . $$
\end{theorem}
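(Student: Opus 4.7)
The plan is to start from the closed-form normal-equation solution $\whatls = (\Xhat^T\Xhat)^{-1}\Xhat^T\avec$ (assuming invertibility, which will be established below) and compare it directly to $W^T\wtrue$, where $W$ is the orthogonal matrix supplied by Lemma~\ref{lem:perfect}. Writing the Bernoulli responses as $\avec = X\wtrue + \eta$ with $\eta_i = a_i - X_i^T\wtrue$ independent, mean-zero, and bounded in $[-1,1]$ conditional on $X$, and writing $\Xhat = XW + E$ for the ASE perturbation $E$, a short algebraic manipulation yields the decomposition
\begin{equation*}
\whatls - W^T\wtrue
  \;=\; (\Xhat^T\Xhat)^{-1}\Xhat^T\eta
  \;-\; (\Xhat^T\Xhat)^{-1}\,\Xhat^T E W^T\wtrue.
\end{equation*}
Since $W$ is orthogonal, $\|W\whatls - \wtrue\| = \|\whatls - W^T\wtrue\|$, so it suffices to bound the right-hand side by $O(n^{-1/2}\log n)$.

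The remaining work is a bookkeeping exercise controlling three quantities, each via one standard tool. First, Lemma~\ref{lem:perfect} gives $\|E\|_{2\to\infty} \le c n^{-1/2}\log n$, which yields $\|E\|_F^2 \le n \cdot (c n^{-1/2}\log n)^2 = O(\log^2 n)$, and hence $\|E\| \le \|E\|_F = O(\log n)$. Second, under the nondegeneracy assumption on $F$ that underlies Lemma~\ref{lem:perfect}, namely that $\Sigma := \E[X_1 X_1^T]$ is positive definite, a matrix Chernoff bound (or a coordinatewise Hoeffding argument on $X^T X/n - \Sigma$) gives $\lambda_{\min}(X^T X) \ge cn$ with high probability; combined with $\|\Xhat^T\Xhat - W^T X^T X W\| \le 2\|X\|\|E\| + \|E\|^2 = O(\sqrt{n}\log n)$ and Weyl's inequality, this yields $\lambda_{\min}(\Xhat^T\Xhat) \ge cn/2$ and therefore $\|(\Xhat^T\Xhat)^{-1}\| = O(1/n)$. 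Third, coordinatewise Hoeffding applied to $[X^T\eta]_j = \sum_i [X_i]_j\,\eta_i$ (a sum of $n$ independent, mean-zero, bounded summands given $X$) yields $\|X^T\eta\| = O(\sqrt{n\log n})$ with high probability, which in turn gives
\begin{equation*}
\|\Xhat^T\eta\| \;\le\; \|X^T\eta\| + \|E\|\|\eta\| \;=\; O(\sqrt{n}\,\log n),
\qquad
\|\Xhat^T E W^T\wtrue\| \;\le\; \|\Xhat\|\|E\|\|\wtrue\| \;=\; O(\sqrt{n}\,\log n),
\end{equation*}
using $\|\Xhat\| \le \|X\| + \|E\| = O(\sqrt{n})$.

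Plugging these estimates into the decomposition gives $\|\whatls - W^T\wtrue\| \le O(1/n)\cdot O(\sqrt{n}\,\log n) = O(n^{-1/2}\log n)$, and a union bound over the finitely many high-probability events retains the $1 - O(n^{-2})$ guarantee. The only nonroutine step is the invertibility argument for $\Xhat^T\Xhat$: Lemma~\ref{lem:perfect} only delivers a row-uniform, $\|\cdot\|_{2\to\infty}$ bound on $E$, and this has to be converted into a spectral-norm statement (via the $\|\cdot\|_F$ inequality above) before Weyl's inequality can be invoked to transfer the $\Theta(n)$ lower bound on $\lambda_{\min}(X^TX)$ to $\Xhat^T\Xhat$. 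Everything else is linear algebra together with off-the-shelf scalar concentration, and I do not expect any serious technical obstacle there.
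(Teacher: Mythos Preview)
Your argument is correct, and it is genuinely different from the paper's route. The paper introduces the intermediate least-squares solution $\wls = \arg\min_w \|Xw - \avec\|$ based on the \emph{true} latent positions, then proves the result by a triangle inequality $\|W\whatls - \wtrue\| \le \|W\whatls - \wls\| + \|\wls - \wtrue\|$. The first piece is controlled by invoking a classical perturbation theorem for least-squares solutions (Golub--Van Loan, Theorem~5.3.1), which in turn requires several auxiliary lemmas to bound the condition number $\kappa_2(XW)$, the quantity $\nuls$, and---most delicately---the residual angle $\thetals$ (showing that $\sin\thetals$ is bounded away from $1$ takes a nontrivial Hoeffding argument). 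The second piece is handled by a direct Hoeffding bound on $\|X^T\zvec\|$, essentially the same as your step for $\|X^T\eta\|$.

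Your approach bypasses the Golub--Van Loan machinery entirely: by expanding the normal-equation solution and writing $\Xhat = XW + E$, $\avec = X\wtrue + \eta$, you reduce everything to controlling $\|(\Xhat^T\Xhat)^{-1}\|$, $\|\Xhat^T\eta\|$, and $\|\Xhat^T E W^T\wtrue\|$ separately. This is more elementary---it needs only Weyl's inequality, the Frobenius-to-spectral passage for $\|E\|$, and scalar Hoeffding---and it avoids the residual-angle analysis altogether. Note also that your $\lambda_{\min}(X^TX)\ge cn$ step is exactly Lemma~\ref{lem:Psvals} in the paper, so you can simply cite it rather than re-derive it via matrix Chernoff. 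The paper's route has the minor conceptual advantage of cleanly separating the ``noise'' error ($\wls$ vs.\ $\wtrue$) from the ``plug-in'' error ($\whatls$ vs.\ $\wls$), but at the cost of heavier perturbation theory; your decomposition is shorter and at least as transparent.
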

\begin{proof}
	The proof of this result relies upon Lemma~\ref{lem:perfect},
	along with Lemmas~\ref{lem:ls:hat} and~\ref{lem:ls:true},
	both of which are proven in the appendix.
	Lemma~\ref{lem:ls:hat} uses a classic result for solutions of perturbed
	linear systems to establish that with high probability,
	$\| W \whatls - \wls \| \le c n^{-1/2} \log n$, where $W \in \R^{d \times d}$
	is the orthogonal matrix guaranteed by Lemma~\ref{lem:perfect} and $ \wls  $ is the LS estimate based on the true latent positions $\{ X_i \}$ rather than on the estimates $\{ \Xhat_i \}$.
	Lemma~\ref{lem:ls:true} applies a basic Hoeffding inequality to show that
	with high probability, $\| \wls - \wtrue \| \le c n^{-1/2} \log n$,
	where again $W \in \R^{d \times d}$ is the orthogonal matrix
	in Lemma~\ref{lem:perfect}.
	A triangle inequality applied to $\| W\whatls - \wtrue\|$
	combined with a union bound over the events
	in Lemmas~\ref{lem:ls:hat} and~\ref{lem:ls:true} yields the result.
\end{proof}

As mentioned in Section~\ref{sec:OOS},
we would like to consider
a maximum-likelihood OOS extension
based on the likelihood
$ \ellhat(w) = \sum_{i=1}^n a_i \log \Xhat_i^T w
+(1-a_i) \log (1-\Xhat_i^T w).$
Toward this end,
we would ideally like to use the solution to the optimization problem
\begin{equation*}
	\arg \max_{w \in \R^d} \ellhat(w),
\end{equation*}
but to ensure a sensible solution, we instead consider
\begin{equation} \label{eq:whatmlconstr}
	\whatml = \arg \max_{w \in \calThat_{\epsilon}}\ellhat(w),
\end{equation}
where we remind the reader that
$ \calThat_\epsilon = 
\{ w \in \R^d : \epsilon \le \Xhat_i^T w \le 1-\epsilon, i=1,2,\dots,n \}$.
Theorem~\ref{thm:mlrate} shows that $\whatml$
recovers the true latent position of the OOS
vertex, up to rotation, with error decaying at the
same rate as that obtained in Theorem~\ref{thm:lsrate}
for the LS OOS extension.

\begin{theorem} \label{thm:mlrate}
	With notation as above,
	let $\whatml$ be the estimate defined in
	Equation~\eqref{eq:whatmlconstr},
	and let $\epsilon > 0$ be such that
	$x,y \in \supp F$ implies $\epsilon < x^T y < 1-\epsilon$.
	Denote the true latent position of the OOS
	vertex $v$ by $\wtrue \in \supp F$.
	Then for all $n$ suitably large,
	there exists an orthogonal matrix
	$W \in \R^{d \times d}$
	such that with high probability,
	$$ \| W \whatml - \wtrue \| \le cn^{-1/2} \log n, $$
	and this matrix $W$ is the same one guaranteed by Lemma~\ref{lem:perfect}.
\end{theorem}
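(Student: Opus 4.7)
The plan is to parallel the proof of Theorem~\ref{thm:lsrate} by bridging through an oracle MLE that uses the true latent positions. Define
$$ \wml = \arg\max_{w \in \calT_\epsilon} \ell(w), \qquad \ell(w) = \sum_{i=1}^n \left[ a_i \log X_i^T w + (1-a_i) \log(1 - X_i^T w) \right], $$
with $\calT_\epsilon = \{ w \in \R^d : \epsilon \le X_i^T w \le 1-\epsilon, \, i \in [n] \}$. The theorem then follows from the triangle inequality
$$ \|W\whatml - \wtrue\| \le \|W\whatml - \wml\| + \|\wml - \wtrue\|, $$
once each piece is controlled at rate $O(n^{-1/2}\log n)$ with high probability. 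These become two lemmas analogous to Lemmas~\ref{lem:ls:hat} and~\ref{lem:ls:true} used in the LS case, and I would crucially use that $\avec$ is independent of $A$ (hence of $\Xhat$) given $X$, which lets concentration arguments on the Bernoulli score treat $\Xhat$ as deterministic.

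For $\|\wml - \wtrue\|$, I would invoke a classical M-estimator argument. The score $\nabla \ell(\wtrue) = \sum_i (a_i - X_i^T\wtrue) X_i / [X_i^T\wtrue(1-X_i^T\wtrue)]$ is a sum of independent mean-zero vectors with uniformly bounded summands, since the denominator is bounded away from $0$ on $\supp F \times \supp F$ by the choice of $\epsilon$. A coordinatewise Hoeffding bound and a union bound give $\|\nabla \ell(\wtrue)\| \le c \sqrt{n}\log n$ with high probability. The observed information $-\nabla^2 \ell(w) = \sum_i X_i X_i^T [a_i/(X_i^T w)^2 + (1-a_i)/(1-X_i^T w)^2]$, by concentration of $n^{-1}\sum_i X_i X_i^T$ around the positive-definite matrix $\E X_1 X_1^T$, is bounded below by $cn I$ uniformly over a neighborhood of $\wtrue$. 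Strong concavity then converts the score bound into $\|\wml - \wtrue\| = O(n^{-1/2}\log n)$.

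For $\|W\whatml - \wml\|$, I would use Lemma~\ref{lem:perfect}, which gives $\|\Xhat_i - W^T X_i\| \le cn^{-1/2}\log n$ uniformly in $i$. Substituting $W^T \wml$ into $\nabla \ellhat$ and comparing term-by-term with $W^T \nabla \ell(\wml) = 0$, one sees that each summand is Lipschitz in $\Xhat_i$ (once $\Xhat_i^T w$ is bounded away from $0$ and $1$); this propagates the per-term perturbation into $\|\nabla \ellhat(W^T\wml)\| \le c\sqrt{n}\log n$ with high probability. The same argument preserves the $\Theta(n)$ lower bound on $\lambdamin(-\nabla^2 \ellhat)$ on the interior of $\calThat_\epsilon$. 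Strong concavity again yields $\|\whatml - W^T \wml\| = O(n^{-1/2}\log n)$, and orthogonality of $W$ completes the bridge.

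The main obstacle is the constrained-optimization bookkeeping needed to justify first-order optimality. To use $\nabla \ellhat(\whatml) = 0$ one needs $\whatml$ strictly interior to $\calThat_\epsilon$, and for the Taylor-comparison one needs $W^T \wml \in \calThat_\epsilon$ (and $\whatml \in W^T \calT_\epsilon$ for the reverse direction). Both follow for suitably large $n$ with high probability: Lemma~\ref{lem:perfect} shows that $\calThat_\epsilon$ is a uniform $O(n^{-1/2}\log n)$ perturbation of $W^T \calT_\epsilon$, and the hypothesis $\epsilon < x^T y < 1-\epsilon$ on $\supp F$ places $\wtrue$ strictly inside $\calT_\epsilon$. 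Carefully choosing a slightly shrunken feasible set on which both $\ell$ and $\ellhat$ are simultaneously strongly concave, and verifying that both estimators lie in its interior, is the most delicate part of the argument.
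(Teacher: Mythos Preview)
Your approach is correct but organized differently from the paper's. You bridge through an oracle MLE $\wml$ computed with the true latent positions, then bound $\|W\whatml - \wml\|$ and $\|\wml - \wtrue\|$ separately, mirroring the structure of Lemmas~\ref{lem:ls:hat} and~\ref{lem:ls:true} from the LS proof. The paper instead goes directly from $\whatml$ to $W^T\wtrue$ in a single strong-concavity step (Lemma~\ref{lem:cvxopt}): using the first-order optimality condition for constrained convex problems (a variational inequality, not $\nabla\ellhat(\whatml)=0$), it obtains $\|W\whatml - \wtrue\| \le \|\nabla\ellhat(W^T\wtrue)\|/(cn)$, and then (Lemma~\ref{lem:nabla}) bounds $\|\nabla\ellhat(W^T\wtrue)\|$ by the triangle inequality as $\|\nabla\ell(W^T\wtrue)\| + \|\nabla\ellhat(W^T\wtrue) - \nabla\ell(W^T\wtrue)\|$. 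The two pieces of that split are exactly your Hoeffding-on-the-score argument and your Lipschitz-in-$\Xhat_i$ perturbation argument, so the computations are the same; only the packaging differs.

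What the paper's route buys is that the constrained-optimization bookkeeping you flag as ``the most delicate part'' largely evaporates. The variational inequality at $\whatml$ requires only that the comparison point $W^T\wtrue$ lie in $\calThat_\epsilon$ (which follows from the strict hypothesis $\epsilon < x^T y < 1-\epsilon$ on $\supp F$ and Lemma~\ref{lem:perfect}); there is no need to argue that $\whatml$ is interior, nor to introduce $\wml$ and then verify $W^T\wml \in \calThat_\epsilon$. What your route buys is a cleaner structural parallel with Theorem~\ref{thm:lsrate}, at the cost of carrying an extra object and an extra feasibility check.
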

\begin{proof}
	Lemma~\ref{lem:cvxopt} applies
	a standard argument from convex optimization,
	alongside the definition of $\calThat_\epsilon$,
	to show that for suitably large $n$,
	$$ \| W \whatml - \wtrue \|
	\le \frac{ c \| \nabla \ellhat( W^T \wtrue ) \| }{ n } \text{ w.h.p.} $$
	Lemma~\ref{lem:nabla} then applies a triangle inequality to show that
	$$ \| \nabla \ellhat( W^T \wtrue ) \| \le  c \sqrt{n} \log n
		\text{ w.h.p.} $$
	Details are given in the appendix.
\end{proof}

\begin{remark}
	\textnormal{
		Given our in-sample embedding $\Xhat$
		and the vector of edge indicators $\avec$,
		we can think of the OOS extension as an estimate of $\wtrue$,
		the latent position of the OOS vertex $v$.
		Lemma~\ref{lem:perfect} implies that if we took the na\"ive approach
		of applying ASE to the adjacency matrix $\Atilde$
		in~\eqref{eq:def:Atilde},
		our estimate would have error of order at most $O(n^{-1/2} \log n)$.
		Theorems~\ref{thm:lsrate} and~\ref{thm:mlrate} imply that the
		OOS estimate obtains the
		\emph{same asymptotic estimation error},
		without recomputing the embedding of $\Atilde$}.
\end{remark}

In addition to the bounds in
Theorems~\ref{thm:lsrate} and~\ref{thm:mlrate},
we can show that the least-squares OOS extension
satisfies a stronger property, namely the following central limit theorem.

\begin{theorem} \label{thm:clt}
Let $(A,X) \sim \RDPG(F,n)$ be a $d$-dimensional RDPG.
Let $\wtrue \in \supp F$ and $\whatls \in \R^d$ be, respectively, the latent position and the least-squares embedding from~\eqref{eq:def:llshat} of an OOS vertex $v$.
There exists a sequence of orthogonal $d \times d$ matrices
$\{V_n\}_{n=1}^\infty$ such that
$$ \sqrt{n}( V_n^T \whatls - \wtrue ) \inlaw \calN(0,\Sigma_{\wtrue}), $$
where $\Sigma_{\wtrue} \in \R^{d \times d}$ is given by
\begin{equation} \label{eq:def:Sigma}
\Sigma_{\wtrue}
	= \Delta^{-1} \E\left[X_1^T \wtrue(1-X_1^T\wtrue)X_1 X_1^T \right]
	\Delta^{-1}, \end{equation}
and $\Delta = \E X_1 X_1^T$.
\end{theorem}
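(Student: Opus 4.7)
The strategy is a Slutsky-type bridge from $\whatls$ to an ``oracle'' least-squares estimator that uses the true latent positions. Let $W_n$ denote the orthogonal matrix from Lemma~\ref{lem:perfect} (which depends on $n$ through $\Xhat$), and set $V_n = W_n^T$, so that $V_n^T \whatls = W_n \whatls$. Define the oracle
$$ \wls = (X^T X)^{-1} X^T \avec. $$
Decomposing
$$ \sqrt{n}(V_n^T \whatls - \wtrue) = \sqrt{n}(W_n \whatls - \wls) + \sqrt{n}(\wls - \wtrue), $$
the plan is to show that the first summand is $o_P(1)$ (the \emph{bridging} step) while the second converges in law to $\calN(0,\Sigma_{\wtrue})$ (the \emph{oracle} CLT); Slutsky's theorem then yields the stated convergence.

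For the oracle CLT, use $\E[a_i \mid X_i] = X_i^T \wtrue$ to write
$$ \sqrt{n}(\wls - \wtrue) = \Bigl(\tfrac{1}{n} X^T X\Bigr)^{-1} \cdot \tfrac{1}{\sqrt{n}} \sum_{i=1}^n X_i\bigl(a_i - X_i^T \wtrue\bigr). $$
The summands $X_i(a_i - X_i^T\wtrue)$ are i.i.d.\ mean zero with common covariance $\E[X_1 X_1^T \cdot X_1^T \wtrue (1 - X_1^T \wtrue)]$ by the conditional-variance formula, so the multivariate Lindeberg--L\'evy CLT handles the sum. The prefactor satisfies $n^{-1} X^T X \to \Delta$ a.s.\ by the strong law, and a second Slutsky step then gives $\sqrt{n}(\wls - \wtrue) \inlaw \calN(0,\Sigma_{\wtrue})$.

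The bridging step is the real challenge. Writing $\Xhat = X W_n + R_n$ with $\|R_n\|_{2 \to \infty} \le c \log n / \sqrt{n}$ (Lemma~\ref{lem:perfect}), a first-order expansion of the normal equations yields the identity
$$ W_n \whatls - \wls = W_n(\Xhat^T \Xhat)^{-1} R_n^T \avec + \bigl(W_n (\Xhat^T \Xhat)^{-1} W_n^T - (X^T X)^{-1}\bigr) X^T \avec. $$
Naive substitution of the $2\to\infty$ bound only controls this at $O_P(n^{-1/2} \log n)$ (as in Theorem~\ref{thm:lsrate}), which falls a factor of $\log n$ short of the $o_P(n^{-1/2})$ needed for Slutsky. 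To close this gap the plan is to exploit two structural facts. First, $\avec$ is independent of the matrix $A$ used to build $\Xhat$, so conditioning on $\Xhat$ makes $R_n^T(\avec - X\wtrue)$ a sum of independent mean-zero Bernoulli residuals whose conditional variance is bounded by $\sum_i \|(R_n)_{i,\cdot}\|^2 = O(\log^2 n)$, yielding magnitude $O_P(\log n)$ rather than the crude $O(\sqrt{n}\log n)$ obtained from $\|R_n\|\,\|\avec\|$. Second, the ASE perturbation $R_n$ admits the leading-order linearization in $A - P$ that underlies the row-wise CLT of \citet{AthLyzMarPriSusTan2016}; this linearization controls cross-terms such as $R_n^T X$ and $R_n^T R_n$ at polylogarithmic rather than $\sqrt{n}$ scale, and feeds back through the identity above to give $\sqrt{n}(W_n \whatls - \wls) = o_P(1)$.

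The main obstacle, as flagged, is this bridging step: the uniform perturbation bound of Lemma~\ref{lem:perfect} suffices for the first-moment statement in Theorem~\ref{thm:lsrate} but is too crude for distributional convergence, and I expect that pushing down to $o_P(n^{-1/2})$ will require invoking the CLT-level perturbation theory of \citet{AthLyzMarPriSusTan2016} together with the conditioning trick sketched above. Once these refinements are in hand, the result of Theorem~\ref{thm:clt} follows routinely by Slutsky, and the limit covariance $\Sigma_{\wtrue}$ coincides with the covariance for in-sample rows in the RDPG CLT, consistent with the intuition that the LLS OOS extension is asymptotically as good as re-embedding the augmented graph from scratch.
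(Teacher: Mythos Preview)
Your overall Slutsky structure and the oracle CLT are correct; the paper's Lemma~\ref{lem:inlaw} proves the same limit for the same leading term via essentially the same i.i.d.\ CLT plus LLN/Slutsky argument. The bridging step, however, is carried out quite differently. The paper does not pass through $\wls$ or use the $W_n$ of Lemma~\ref{lem:perfect}; instead the aligning rotation is taken to be $V_n=V_AV_P^T$, where $V_A\Sigma V_P^T$ is the SVD of $\UA^T\UP$, and $\sqrt{n}(V_n^T\whatls-\wtrue)$ is expanded directly as the oracle term plus three spectral remainders in $\UA,\UP,\SA,\SP$. Two of these fall to off-the-shelf perturbation bounds from \citet{lyzinski15_HSBM}, namely $\|\UA^T\UP-V_n\|_F=O(n^{-1}\log n)$ and the commutator bounds $\|\SA^{\pm1/2}V_n-V_n\SP^{\pm1/2}\|_F$. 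The hard remainder, $\sqrt{n}\SA^{-1/2}(\UA^T-\UA^T\UP\UP^T)(\avec-X\wtrue)$, is dispatched by a \emph{replication trick} (Lemma~\ref{lem:exchangeable}): build $\Btilde\in\R^{n\times n}$ whose columns are $n$ i.i.d.\ copies of $\avec$, bound $\|(\UA^T-\UA^T\UP\UP^T)(\Btilde-X\Wtilde)\|_F$ via Davis--Kahan and matrix Bernstein, and recover the single-column bound by Markov's inequality. This exploits exactly the conditional independence of $\avec$ from $A$ that you identified, but packages it so that no row-wise linearization of $\Xhat-XW_n$ is ever needed.

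Your alternative bridging plan is reasonable but, as you yourself flag, incomplete, and the concrete gap is the cross-term $R_n^TX\wtrue$ hidden inside $W_n(\Xhat^T\Xhat)^{-1}R_n^T\avec$ once you split $\avec=X\wtrue+(\avec-X\wtrue)$. Your conditioning argument cleanly gives $R_n^T(\avec-X\wtrue)=O_P(\log n)$. But for $R_n^TX$, Lemma~\ref{lem:perfect} alone yields only $\|R_n^TX\|\le\|R_n\|_F\|X\|=O(\sqrt{n}\log n)$, which after $\|(\Xhat^T\Xhat)^{-1}\|=O(n^{-1})$ and the outer $\sqrt{n}$ leaves $O(\log n)$ rather than $o(1)$; the same issue recurs in your second summand through $\Xhat^T\Xhat-W_n^TX^TXW_n$. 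Closing this genuinely requires the first-order expansion $R_n\approx(A-P)\UP\SP^{-1/2}$ together with control of its higher-order remainder, i.e., the full machinery of \citet{AthLyzMarPriSusTan2016} that you gesture at but do not execute. The paper's decomposition is engineered precisely to avoid isolating $R_n^TX$: the analogous cross-term there is $\sqrt{n}(\SA^{-1/2}\UA^TX-V_n)\wtrue$, which reduces directly to the two perturbation lemmas above without any appeal to the row-wise CLT.
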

\begin{proof}
Details are given in the appendix.
\end{proof}

If the OOS vertex is distributed according to $F$,
we have the following corollary by
integrating $\wtrue$ with respect to $F$.
\begin{corollary} \label{cor:clt}
Let $(A,X) \sim \RDPG(F,n)$ be a $d$-dimensional RDPG,
and let $\wtrue$ be distributed according to $F$, independent of $(A,X)$.
Then there exists a sequence of orthogonal $d \times d$ matrices
        $\{V_n\}_{n=1}^\infty$ such that
$$ \sqrt{n}( V_n^T \whatls - \wtrue ) \inlaw
	\int \calN(0, \Sigma_{w}) dF(w), $$
where $\Sigma_w$ is defined as in Equation~\eqref{eq:def:Sigma} above.
\end{corollary}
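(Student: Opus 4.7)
The plan is to condition on the OOS latent position $\wtrue$ and apply Theorem~\ref{thm:clt} pointwise in $w \in \supp F$, then lift the pointwise CLT to the claimed mixture limit via bounded convergence. Because $\wtrue$ is assumed independent of $(A,X)$, conditional on $\{\wtrue = w\}$ the pair $(A,X)$ is still $\RDPG(F,n)$-distributed and the OOS edge indicators satisfy $a_i \mid X \sim \Bernoulli(X_i^T w)$, which is exactly the setup of Theorem~\ref{thm:clt} with deterministic latent position $w$. A point that must be checked, but which I expect to hold from the construction in the proof of Theorem~\ref{thm:clt}, is that the rotation sequence $\{V_n\}$ depends only on the in-sample quantities $(A,X)$ (or on $\Xhat$) and not on the OOS vertex, so the \emph{same} sequence $\{V_n\}$ witnesses the CLT simultaneously for every $w \in \supp F$.

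The first step is to invoke Theorem~\ref{thm:clt} to conclude, for each fixed $w \in \supp F$,
$$ \sqrt{n}\bigl(V_n^T \whatls - w\bigr) \,\bigm|\, \wtrue = w \;\inlaw\; \calN(0, \Sigma_w). $$
Equivalently, for any bounded continuous test function $f : \R^d \to \R$,
$$ g_n(w) \;:=\; \E\bigl[\, f\bigl(\sqrt{n}(V_n^T \whatls - w)\bigr) \,\bigm|\, \wtrue = w \,\bigr] \;\longrightarrow\; g(w) \;:=\; \int f(z)\, d\calN(0,\Sigma_w)(z) $$
as $n \to \infty$, pointwise in $w$.

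Next I would integrate out $\wtrue$. Conditioning on $\wtrue$ and using its independence from $(A,X)$,
$$ \E\, f\bigl(\sqrt{n}(V_n^T \whatls - \wtrue)\bigr) \;=\; \int g_n(w)\, dF(w). $$
Since $|g_n(w)| \le \|f\|_\infty$ uniformly in $n$ and $w$, the dominated convergence theorem yields
$$ \int g_n(w)\, dF(w) \;\longrightarrow\; \int g(w)\, dF(w) \;=\; \int \!\! \int f(z)\, d\calN(0,\Sigma_w)(z)\, dF(w), $$
and the right-hand side is exactly $\int f \, d\mu$ for the mixture measure $\mu = \int \calN(0,\Sigma_w)\, dF(w)$. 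Since $f$ was an arbitrary bounded continuous function, the portmanteau theorem delivers the asserted convergence in distribution.

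The main obstacle is essentially bookkeeping rather than new analysis: one must verify that the same rotation sequence $\{V_n\}$ from Theorem~\ref{thm:clt} can be chosen independently of $w$ (so that the conditional CLTs can be glued together), and that the map $w \mapsto \Sigma_w$ is measurable so that the mixture $\int \calN(0,\Sigma_w)\, dF(w)$ is a well-defined Borel probability measure on $\R^d$. The former is a structural observation about the construction in Theorem~\ref{thm:clt}; the latter is immediate from the explicit formula~\eqref{eq:def:Sigma}, whose dependence on $w$ is polynomial. No sharper probabilistic input is needed because the substantive CLT work has already been done in Theorem~\ref{thm:clt}.
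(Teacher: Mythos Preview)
Your proposal is correct and is essentially the same approach the paper takes: the paper simply states that the corollary follows ``by integrating $\wtrue$ with respect to $F$,'' and your argument makes this precise via conditioning, dominated convergence, and the portmanteau theorem. Your observation that the rotation $V_n = V_A V_P^T$ from the proof of Theorem~\ref{thm:clt} depends only on the in-sample data $(A,X)$ and not on the OOS vertex is exactly the structural point needed to make the integration argument go through, and it holds by construction.
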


We conjecture that a CLT analogous to Theorem~\ref{thm:clt}
holds for the ML OOS extension.

\begin{figure*}
  \centering
  \includegraphics[width=\textwidth]{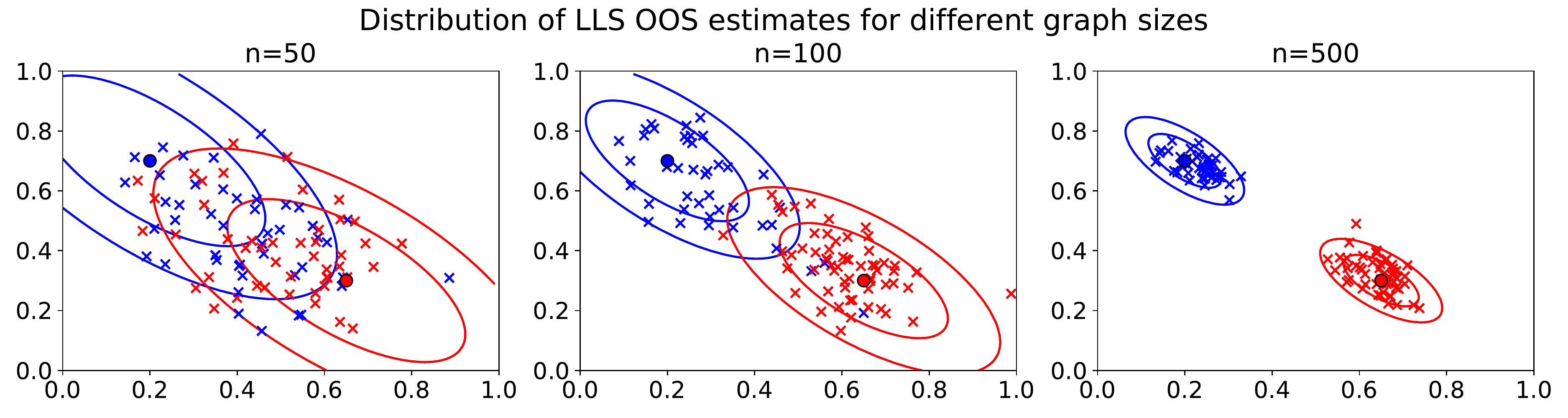}
  \vspace{-5mm}
  \caption{Empirical distribution of the LLS OOS estimate for 100
	independent trials for number of vertices
	$n=50$ (left), $n=100$ (middle) and $n=500$ (right).
	Each plot shows the positions of 100 independent OOS
	embeddings, indicated by crosses,
	and colored according to cluster membership.
	Contours indicate two generalized standard deviations of the
	multivariate normal (i.e., 68\% and 95\% of the probability mass)
	about the true latent positions, which are indicated by solid circles.
	We note that even with merely 100 vertices, the normal
	approximation is already quite reasonable.}
  \label{fig:lls_nplot}
\end{figure*}

\section{Experiments} \label{sec:expts}

In this section, we briefly explore our results through simulations.
We leave a more thorough experimental examination of our results,
particularly as they apply to real-world data,
for future work.
We first give a brief exploration of how quickly the asymptotic
distribution in Theorem~\ref{thm:clt} becomes a good approximation.
Toward this end, let us consider a simple mixture of point masses,
$ F = F_{\lambda,x_1,x_2} = \lambda \delta_{x_1} + (1-\lambda) \delta_{x_2}$,
where $x_1,x_2 \in \R^2$ and $\lambda \in (0,1)$.
This corresponds to a two-block stochastic block model
\citep{Holland1983},
in which the block probability matrix is given by
$$ \begin{bmatrix} x_1^T x_1 & x_1^T x_2 \\
		x_1^Tx_2 & x_2^Tx_2 \end{bmatrix}. $$
Corollary~\ref{cor:clt} implies that if all latent positions
(including the OOS vertex) are drawn according to $F$,
then the OOS estimate should be distributed as a mixture
of normals centered at $x_1$ and $x_2$, with respective mixing coefficients
$\lambda$ and $1-\lambda$.

To assess how well the asymptotic distribution predicted by
Theorem~\ref{thm:clt} and Corollary~\ref{cor:clt} holds,
we generate RDPGs with latent positions
drawn i.i.d. from distribution $F = F_{\lambda,x_1,x_2}$ defined above, with
$$ \lambda = 0.4,~x_1 = (0.2,0.7)^T, \text{ and } x_2 = (0.65, 0.3)^T. $$
For each trial, we draw $n+1$ independent latent positions from
$F$, and generate a binary adjacency matrix from these latent positions.
We let the $(n+1)$-th vertex be the OOS vertex.
Retaining the subgraph induced by the first $n$ vertices, we obtain
an estimate $\Xhat \in \R^{n \times 2}$ via ASE,
from which we obtain an estimate for the OOS vertex
via the LS OOS extension as defined in~\eqref{eq:def:llshat}.
We remind the reader that for each RDPG draw,
we initially recover the latent positions only up to a rotation.
Thus, for each trial, we compute a Procrustes alignment \citep{GowDij2004}
of the in-sample estimates $\Xhat$ to their true latent positions.
This yields a rotation matrix $R$, which we apply to the OOS estimate.
Thus, the OOS estimates are sensibly comparable across trials.
Figure~\ref{fig:lls_nplot}
shows the empirical distribution of the OOS embeddings
of 100 independent RDPG draws, for $n=50$ (left),
$n=100$ (center) and $n=500$ (right) in-sample vertices.
Each cross is the location of the OOS estimate for a single
draw from the RDPG with latent position distribution $F$,
colored according to true latent position.
OOS estimates with true latent position
$x_1$ are plotted as blue crosses, while OOS estimates
with true latent position $x_2$ are plotted as red crosses.
The true latent positions $x_1$ and $x_2$ are plotted as solid circles,
colored accordingly.
The plot includes contours for the two normals centered at $x_1$ and $x_2$
predicted by Theorem~\ref{thm:clt} and Corollary~\ref{cor:clt},
with the ellipses indicating the isoclines corresponding
to one and two (generalized) standard deviations.

\begin{figure*}
  \centering
  \includegraphics[width=\textwidth]{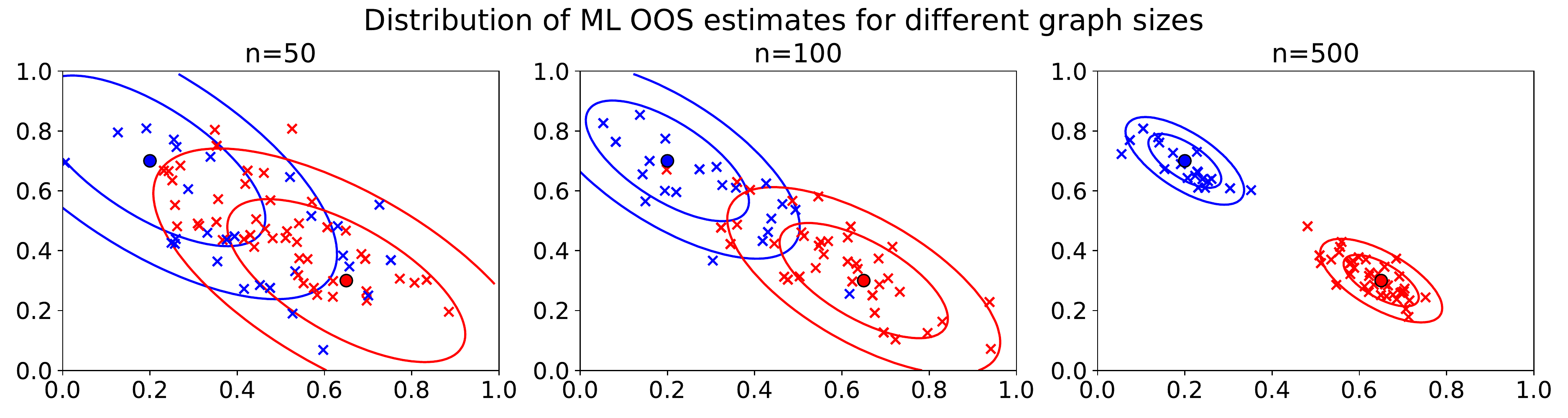}
  \vspace{-5mm}
  \caption{Empirical distribution of the ML OOS estimate for 100
        independent trials for number of vertices
        $n=50$ (left), $n=100$ (middle) and $n=500$ (right).
        Each plot shows the positions of 100 independent OOS
        embeddings, indicated by crosses,
        and colored according to cluster membership.
        Contours indicate two generalized standard deviations of the
        multivariate normal (i.e., 68\% and 95\% of the probability mass)
        about the true latent positions, which are indicated by solid circles.
	Once again, even with merely 100 vertices, the normal
        approximation is already quite reasonable, supporting our conjecture
	that the ML OOS estimates also distributed as a mixture of normals
	according to the latent position distribution $F$.}
  \label{fig:ml_nplot}
\end{figure*}

Examining Figure~\ref{fig:lls_nplot}, we see that even with only 100 vertices,
the mixture of normal distributions predicted by Theorem~\ref{thm:clt}
holds quite well, with the exception of a few gross outliers
from the blue cluster. With $n=500$ vertices, the approximation is
particularly good. Indeed, the $n=500$ case appears to be slightly
under-dispersed, possibly due to the Procrustes alignment.
It is natural to wonder whether a similarly good fit is exhibited by the
ML-based OOS extension.
We conjectured at the end of Section~\ref{sec:theory} that a CLT
similar to that in Theorem~\ref{thm:clt} would also hold
for the ML-based OOS extension as defined in
Equation~\eqref{eq:whatmlconstr}.
Figure~\ref{fig:ml_nplot} shows the empirical distribution of 100 independent
OOS estimates, under the same experimental setup as Figure~\ref{fig:lls_nplot},
but using the ML OOS extension
rather than the linear least-squares extension.
The plot supports our conjecture that the ML-based OOS estimates are
also approximately normally distributed about the true latent positions.

Figure~\ref{fig:lls_nplot} suggests that we may be confident in applying the
large-sample approximation suggested by Theorem~\ref{thm:clt}
and Corollary~\ref{cor:clt}.
Applying this approximation allows us to investigate the trade-offs
between computational cost and classification accuracy,
to which we now turn our attention.
The mixture distribution $F_{\lambda,x_1,x_2}$ above suggests a
task in which, given an adjacency matrix $A$, we wish to classify
the vertices according to which of two clusters or communities they belong.
That is, we will view two vertices as belonging to the same community if
their latent positions are the same
\citep[i.e., the latent positions specify an SBM,]{Holland1983}.
More generally, one may view the task of recovering vertex block memberships
in a stochastic block model as a clustering problem.
\citet{LyzSusTanAthPri2014} showed that applying ASE to such a graph,
followed by $k$-means clustering of the estimated latent positions,
correctly recovers community memberships of all the vertices
(i.e., correctly assigns all vertices to their true latent positions)
with high probability.

For concreteness, let us consider a still simpler mixture model,
$F = F_{\lambda,p,q} = \lambda \delta_p + (1-\lambda) \delta_q$,
where $0 < p < q < 1$,
and draw an RDPG $(\Atilde,X) \sim \RDPG(F,n+m)$,
taking the first $n$ vertices to be in-sample,
with induced adjacency matrix $A \in \R^{n \times n}$.
That is, we draw the full matrix
$$ \Atilde = \begin{bmatrix} A & B \\
                B^T & C \end{bmatrix}, $$
where $C \in \R^{m \times m}$
is the adjacency matrix of the subgraph induced by the $m$ OOS vertices
and $B \in \R^{n \times m}$ encodes the edges between the in-sample
vertices and the OOS vertices.
The latent positions $p$ and $q$ encode a community structure in
the graph $\Atilde$, and, as alluded to above,
a common task in network statistics
is to recover this community structure.
Let $\wtrue^{(1)}, \wtrue^{(2)}, \dots, \wtrue^{(m)} \in \{p,q\}$ denote the
true latent positions of the $m$ OOS vertices,
with respective least-squares OOS estimates
$\whatls^{(1)}, \whatls^{(2)}, \dots, \whatls^{(m)}$,
each obtained from the in-sample ASE $\Xhat \in \R^n$ of $A$.
We note that one could devise a different OOS embedding
procedure that makes use of the subgraph $C$ induced by these $m$
OOS vertices, but we leave the development of such a method
to future work.
Corollary~\ref{cor:clt} implies that each $\whatls^{(t)}$ for $t \in [m]$
is marginally (approximately) distributed as 
$$ \whatls^{(t)} \sim \lambda \calN(p,(n+1)^{-1}\sigma^2_p)
+ (1-\lambda)\calN(q,(n+1)^{-1}\sigma^2_q), $$
where
\begin{equation*} \begin{aligned}
\sigma^2_p &= \Delta^{-2} \left(\lambda p^2(1-p^2)p^2
	+ (1-\lambda)pq(1-pq)q^2 \right), \\
\sigma^2_q &= \Delta^{-2} \left( \lambda pq(1-pq)p^2
+ (1-\lambda)q^2(1-q^2)q^2 \right), \\
\text{ and } \Delta &= \lambda p^2 + (1-\lambda)q^2.
\end{aligned} \end{equation*}
Classifying the $t$-th OOS vertex based on $\whatls^{(t)}$
via likelihood ratio thus has (approximate) probability of error
\begin{equation*}
\eta_{n,p,q}
= \lambda(1 - \Phi\left( \frac{\sqrt{n+1}(x_{n+1,p,q} - p) }{ \sigma_p } \right)
+ (1-\lambda)\Phi\left( \frac{\sqrt{n+1}(x_{n+1,p,q} - q)}{ \sigma_q  } \right),
\end{equation*}
where $\Phi$ denotes the cdf of the standard normal and
$x_{n,p,q}$ is the value of $x$ solving
\begin{equation*}
\lambda \sigma_p^{-1} \exp\{ n(x-p)^2/(2\sigma^2_p) \}
= (1-\lambda) \sigma_q^{-1} \exp\{ n(x-q)^2/(2\sigma^2_q) \} ,
\end{equation*}
and hence our overall error rate
when classifying the $m$ OOS vertices will grow as $m \eta_{n+1,p,q}$.

As discussed previously,
the OOS extension allows us to avoid
the expense of computing the ASE of the full matrix
$$ \Atilde = \begin{bmatrix} A & B \\
		B^T & C \end{bmatrix}. $$
The LLS OOS extension is computationally inexpensive,
requiring only the computation of the
matrix-vector product $\SA^{-1/2} \UA^T \avec$,
with a time complexity $O( d^2 n )$ (assuming one does not precompute
the product $\SA^{-1/2} \UA^T$).
The eigenvalue computation required for embedding
$\Atilde$ is far more expensive
than the LLS OOS extension.
Nonetheless, if one were intent on reducing the OOS classification error
$\eta_{n+1,p,q}$, one might consider paying the computational
expense of embedding $\Atilde$ to obtain estimates
$\wtilde^{(1)}, \wtilde^{(2)}, \dots, \wtilde^{(m)}$
of the $m$ OOS vertices.
That is, we obtain estimates for the $m$ OOS vertices
by making them in-sample vertices, at the expense of solving
an eigenproblem on the $(m+n)$-by-$(m+n)$ adjacency matrix.
Of course, the entire motivation of our approach is that the in-sample
matrix $A$ may not be available.
Nonetheless, a comparison against this baseline,
in which all data is used to compute our embeddings, is instructive.

Theorem 1 in \citet{AthLyzMarPriSusTan2016} implies that the
$\wtilde^{(t)}$ estimates based on embedding the full matrix $\Atilde$ are
(approximately) marginally distributed as
$$ \wtilde^{(t)} \sim \lambda \calN(p,(n+m)^{-1}\sigma^2_p)
+ (1-\lambda)\calN(q,(n+m)^{-1}\sigma^2_q), $$
with classification error
\begin{equation*}
\eta_{n+m,p,q}
= \lambda \Phi\left( \frac{ p - x_{n+m,p,q} }{ \sigma_p } \right)
+ (1-\lambda)\Phi\left( \frac{ x_{n+m,p,q} - q }{ \sigma_q  } \right),
\end{equation*}
where $x_{n+m,p,q}$ is the value of $x$ solving
\begin{equation*}
\lambda \sigma_p^{-1} \exp\{ (m+n)(x-p)^2/(2\sigma^2_p) \} =
(1-\lambda) \sigma_q^{-1} \exp\{ (m+n)(x-q)^2/(2\sigma^2_q) \} ,
\end{equation*}
and it can be checked that
$\eta_{n+m,q,p} < \eta_{n,q,p}$ when $m > 1$.
Thus, at the cost of computing the ASE of $\Atilde$,
we may obtain a better estimate.
How much does this additional computation improve
classification the OOS vertices?
Figure~\ref{fig:ratio} explores this question.

\begin{figure}
  \centering
  \includegraphics[width=0.66\columnwidth]{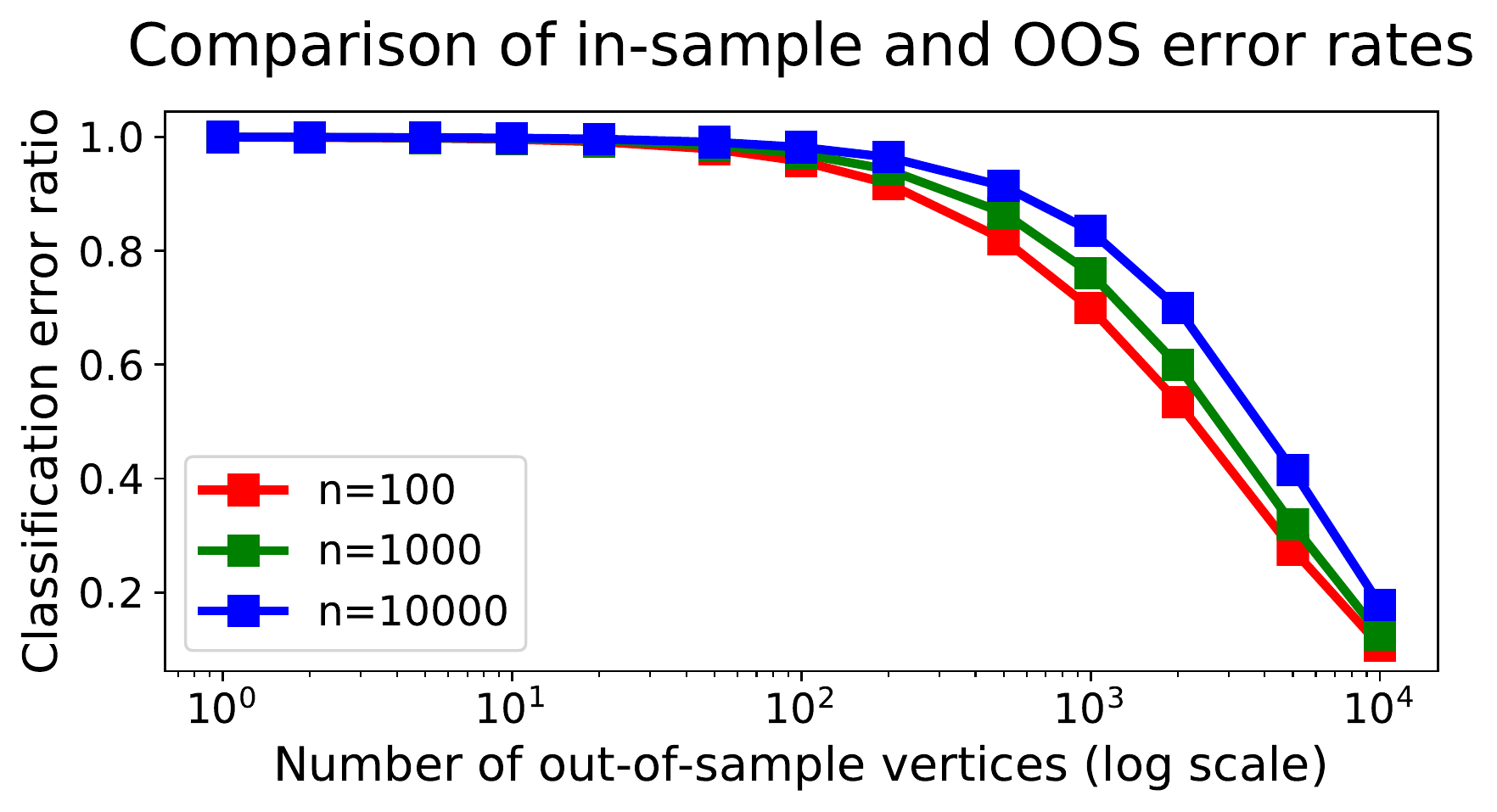}
  \vspace{-5mm}
  \caption{Ratio of the OOS classification error 
	to the in-sample classification error as a function of the
	number of OOS vertices $m$, for $n=100$ vertices,
	$n=1000$ vertices and $n=10000$ vertices. We see that for $m \le 100$,
	the expensive in-sample embedding does not improve appreciably
	on the OOS classification error.
	However, when hundreds or thousands of OOS vertices
	are available simultaneously (i.e., $m \ge 100$), we see that
	the in-sample embedding may improve upon the OOS
	estimate by a significant multiplicative factor.}
  \label{fig:ratio}
\end{figure}

Figure~\ref{fig:ratio} compares the error rates of the in-sample
and OOS estimates as a function of $m$ and $n$ in the model just described,
with $\lambda = 0.4, p=0.6$ and $q=0.61$.
The plot depicts the ratio of the (approximate) in-sample classification
error $\eta_{(n+m),p,q}$ to the (approximate) OOS classification
error $\eta_{(n+1),p,q}$, as a function of the number of OOS
vertices $m$, for differently-sized in-sample graphs,
$n=100, 1000,$ and $10000$.
We see that over several magnitudes of graph size,
the in-sample embedding does not improve appreciably over the
OOS embedding except when multiple hundreds of OOS
vertices are available.
When hundreds or thousands of OOS vertices are available
simultaneously, we see in the right-hand side of Figure~\ref{fig:ratio}
that the in-sample embedding classification error may improve upon
the OOS classification error by a large multiplicative factor.
Whether or not this improvement is worth the additional computational
expense will, depend upon the available resources and desired accuracy,
but this suggests that the additional
expense associated with performing a second ASE computation
is only worthwhile in the event that hundreds or thousands of OOS
vertices are available simultaneously.
This surfeit of OOS vertices is rather divorced from the typical setting
of OOS extension problems, where one typically wishes to embed at most a few
previously unseen observations.

\section{Discussion and Conclusion} \label{sec:conclusion}
We have presented a theoretical investigation of two OOS
extensions of the ASE, one based on a linear least squares estimate
and the other based on a plug-in maximum-likelihood estimate.
We have also proven a central limit theorem for the LLS-based extension,
and simulation suggests that this CLT is a good approximation even with
just a few hundred vertices.
We conjecture that a similar CLT holds for the ML-based OOS extension,
a conjecture supported by similar simulation data.
Finally, we have given a brief illustration of how this OOS extension
and the approximation it introduces
might be weighed against the computational expense of recomputing
a full graph embedding by examining how vertex classification error
depends on the size of the set of OOS vertices.
We leave a more thorough exploration of this trade-off for future work.

\bibliographystyle{plainnat}
\bibliography{biblio}

\newpage
\onecolumn
\appendix
\section*{Appendix}

We collect here the proofs of our two main theorems.
We will make frequent use of the following result,
a proof of which can be found in \citet{AthLyzMarPriSusTan2016} Lemma 1
or in \citet{LevAthTanLyzPri2017} Observation 2.
\begin{lemma}
	[Adapted from \citet{AthLyzMarPriSusTan2016} Lemma 1]
	\label{lem:Psvals}
	Let $X = [X_1,X_2,\dots,X_n]^T \in \R^{n \times d}$ have rows drawn i.i.d.
	from some $d$-dimensional inner product distribution $F$ and denote $ P = XX^{T} $.
	Then with high probability we have $\sigma_1(P) \le n$
	and $\sigma_d(P) \ge cn$.
	Further, it follows that $\sigma_1(X) \le \sqrt{n}$ and
	$\sigma_d(X) \ge c \sqrt{n}$, also with high probability.
\end{lemma}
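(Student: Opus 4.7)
The plan is to reduce everything to singular-value bounds on $X$, since $P = XX^T$ has nonzero singular values $\sigma_i(P) = \sigma_i(X)^2$ for $i = 1,\dots,d$ (read off from the SVD of $X$). From there the argument splits cleanly into a deterministic upper bound and a probabilistic lower bound.

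For the upper bound I would invoke the RDPG support condition directly: taking $x = y = X_i \in \supp F$ in the condition $x^T y \in [0,1]$ forces $\|X_i\|^2 \le 1$ almost surely. Hence
$$ \sigma_1(X)^2 \;\le\; \|X\|_F^2 \;=\; \sum_{i=1}^n \|X_i\|^2 \;\le\; n $$
deterministically, which gives both $\sigma_1(X) \le \sqrt{n}$ and $\sigma_1(P) \le n$ without invoking any randomness.

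For the lower bound, the key input is that $F$ being a genuinely $d$-dimensional inner product distribution makes $\Delta := \E X_1 X_1^T$ positive definite, so $\lambdamin(\Delta) =: \gamma > 0$. Writing
$$ \tfrac{1}{n}\, X^T X \;=\; \tfrac{1}{n} \sum_{i=1}^n X_i X_i^T $$
as an empirical average of i.i.d.\ PSD matrices each bounded in operator norm by $1$, a matrix Bernstein (or Hoeffding) inequality yields a tail bound of the shape
$$ \Pr\!\left[\, \bigl\| \tfrac{1}{n} X^T X - \Delta \bigr\| \ge t \,\right] \;\le\; 2d\,\exp(-c n t^2) $$
for $t$ below an absolute constant. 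Choosing $t = c' \sqrt{(\log n)/n}$ with $c'$ large enough forces the right-hand side below $c n^{-2}$, matching the paper's working definition of ``with high probability.'' On this event, Weyl's inequality gives
$$ \sigma_d(X^T X) \;\ge\; n\gamma - c\sqrt{n \log n} \;\ge\; \tfrac{\gamma}{2}\, n $$
for all $n$ suitably large, and translating back through $\sigma_d(P) = \sigma_d(X^T X) = \sigma_d(X)^2$ yields both remaining bounds with constant $c = \gamma/2$.

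The only substantive step is the matrix concentration inequality, and the only care required is tuning constants so that the failure probability sits below $cn^{-2}$. Since the summands $X_i X_i^T$ are uniformly bounded in operator norm by $1$, this is a routine application of standard matrix Bernstein/Hoeffding bounds and introduces no new technical machinery beyond what is already used in the cited references \citet{AthLyzMarPriSusTan2016} and \citet{LevAthTanLyzPri2017}.
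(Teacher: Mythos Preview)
Your argument is correct. Note, however, that the paper does not actually supply its own proof of this lemma: it merely states the result and points to \citet{AthLyzMarPriSusTan2016} Lemma~1 and \citet{LevAthTanLyzPri2017} Observation~2 for the details. Your proposal reconstructs what is essentially the standard argument underlying those references---a deterministic Frobenius bound for $\sigma_1$ using $\|X_i\|^2 \le 1$, and a matrix Bernstein concentration of $n^{-1} X^T X$ about $\Delta = \E X_1 X_1^T$ (combined with Weyl's inequality) for $\sigma_d$. The only implicit assumption you rely on, namely that $\Delta$ is strictly positive definite, is exactly the nondegeneracy built into the phrase ``$d$-dimensional inner product distribution,'' so there is no gap.
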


\section{Proof of Theorem \ref{thm:lsrate}}
\label{sec:proof:lsrate}

To prove Theorem \ref{thm:lsrate}, we must relate the least squares
solution $\whatls$ of \eqref{eq:def:llshat}
to the true latent position $\wtrue$.
We will proceed in two steps.
First, in Section~\ref{subsec:ls}, we will show that
$\whatls$ is close to another least-squares solution $\wls$,
based on the true latent positions $\{ X_i \}$ rather than on the
estimates $\{ \Xhat_i \}$. That is, $\wls$ is the solution
\begin{equation} \label{eq:def:wls}
	\wls = \arg \min_{w \in \R^d} \| X w - \avec \|_F.
\end{equation}
Second, in Section~\ref{subsec:true}, we will show that $\wls$
is close to the true latent position $\wtrue$.
An application of the triangle inequality will then yield our
desired result.

\subsection{Bounding $\| \whatls - \wls \|$}
\label{subsec:ls}

Our goal in this section is to establish a bound on $\| \whatls - \wls \|$,
where $\whatls$ is the solution to Equation~\eqref{eq:def:llshat}
and $\wls$ is as defined by Equation~\eqref{eq:def:wls}.
Our bound will depend upon
a basic result for solutions of perturbed linear systems,
which we adapt from \citet{GolVan2012}.
In essence, we wish to compare
\begin{equation*}
	\whatls = \arg \min_{w \in \R^d} \| \Xhat w - \avec \|_F
\end{equation*}
against 
\begin{equation*}
	\wls = \arg \min_{w \in \R^d} \| X w - \avec \|_F.
\end{equation*}
Recall that for a matrix $B \in \R^{n \times d}$ of full column rank,
we define the condition number
$$ \kappa_2(B) = \frac{ \sigma_1(B) }{ \sigma_d(B) }. $$
\begin{theorem}[\citet{GolVan2012}, Theorem 5.3.1]
	\label{thm:gvl}
	Suppose that $\wls, \rls, \whatls, \rhatls$ satisfy
	$$ \| X \wls - \avec \| = \min_w \| X w - \avec \|,
	\enspace
	\rls = \avec - X \wls, $$
	$$ \| \Xhat \whatls - \avec \| = \min_w \| \Xhat w - \avec \|,
	\enspace
	\rhatls = \avec - \Xhat \whatls, $$
	and that
	\begin{equation} \label{eq:gvl:errorgrowth}
		\| \Xhat - XW \| < \sigma_d( X ).
	\end{equation}
	Assume $\avec, \rls$ and $\wls$ are all non-zero and define
	$\thetals \in (0, \pi/2)$ by
	$\sin \thetals = \| \rls \|/\|\avec\|$.
	If $\epsilon = \|\Xhat - XW \|/\| XW \|$ and
	$$ \nuls = \frac{ \| X \wls \| }{ \sigma_d(XW) \| W^T \wls \| }, $$
	then
	\begin{equation} \label{eq:GVLbound}
		\frac{ \| \whatls - W^T \wls \| }{ \| W^T \wls \| }
		\le \epsilon \left( \frac{ \nuls }{ \cos \thetals }
		+ (1 + \nuls \tan \thetals) \kappa_2(XW) \right)
		+ O(\epsilon^2).
	\end{equation}
\end{theorem}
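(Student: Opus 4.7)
The plan is to reduce the problem to a standard least-squares perturbation analysis by absorbing the orthogonal rotation into the design matrix. Set $B = XW$, $E = \Xhat - XW$, and $\utilde \coloneqq W^T \wls$; because $W$ is orthogonal, $\sigma_i(B) = \sigma_i(X)$ for every $i$, so $\kappa_2(B) = \kappa_2(XW)$ and the hypothesis $\|E\| < \sigma_d(X)$ reads $\|E\| < \sigma_d(B)$. One checks that $\utilde = B^\dagger \avec$ and that the residual $\rls = \avec - B\utilde$ is invariant under the rotation, and Weyl's inequality together with $\|E\| < \sigma_d(B)$ gives $\sigma_d(B+E) > 0$, so $\whatls = (B+E)^\dagger \avec$ is well-defined. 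The task then is to bound $\|\Delta\|/\|\utilde\|$, where $\Delta \coloneqq \whatls - \utilde$.

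Subtracting the normal equations $B^T B\, \utilde = B^T \avec$ and $(B+E)^T(B+E)\, \whatls = (B+E)^T \avec$, substituting $\whatls = \utilde + \Delta$, and collecting terms produces the exact identity
\[ (B+E)^T(B+E)\,\Delta \;=\; E^T \rls \;-\; B^T E\, \utilde \;-\; E^T E\, \utilde. \]
Invert by expanding $\bigl((B+E)^T(B+E)\bigr)^{-1}$ via a Neumann series (legitimate under $\|E\| < \sigma_d(B)$) and separate contributions by order in $\epsilon$; the leading-order terms give
\[ \Delta \;=\; (B^T B)^{-1} E^T \rls \;-\; B^\dagger E\, \utilde \;+\; O(\epsilon^2). \]
Taking norms with $\|(B^T B)^{-1}\| = \sigma_d(B)^{-2}$ and $\|B^\dagger\| = \sigma_d(B)^{-1}$, dividing by $\|\utilde\|$, and using $\|E\| = \epsilon \|B\|$ yields the clean first-order bound
\[ \frac{\|\Delta\|}{\|\utilde\|} \;\le\; \epsilon\,\kappa_2(B) \;+\; \epsilon\,\frac{\|B\|\,\|\rls\|}{\sigma_d(B)^2 \|\utilde\|} \;+\; O(\epsilon^2). \]

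To match the stated form, I would unpack the definitions of $\nuls$, $\cos\thetals$, and $\tan\thetals$ to verify the algebraic identity
\[ \kappa_2(B)\,\nuls \tan\thetals \;=\; \frac{\|B\|}{\sigma_d(B)} \cdot \frac{\|B\utilde\|}{\sigma_d(B)\|\utilde\|} \cdot \frac{\|\rls\|}{\|B\utilde\|} \;=\; \frac{\|B\|\,\|\rls\|}{\sigma_d(B)^2 \|\utilde\|}, \]
so the derived bound equals $\epsilon\,\kappa_2(B)(1 + \nuls \tan\thetals) + O(\epsilon^2)$. Since $\nuls/\cos\thetals \ge 0$, adding the extra term $\epsilon\,\nuls/\cos\thetals$ yields the claimed inequality \eqref{eq:GVLbound} \emph{a fortiori}; the non-zero hypotheses on $\avec$, $\rls$, and $\wls$ ensure each of these quantities is well-defined. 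The main obstacle in the argument is not any single inequality but the rigorous bookkeeping of the $O(\epsilon^2)$ remainder: I would collect every Neumann-series term beyond the leading order, along with the $E^T E\, \utilde$ contribution on the right-hand side, and show that the sum is bounded by a constant times $\epsilon^2$ whose size depends only on $\kappa_2(B)$, $\|\utilde\|$, and $\|\rls\|$. Beyond that accounting, the proof is purely algebraic, using only the normal equations and standard operator-norm estimates for $B^\dagger$ and $(B^T B)^{-1}$.
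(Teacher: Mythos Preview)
The paper does not prove this theorem: it is quoted verbatim as Theorem~5.3.1 of \citet{GolVan2012} and used as a black box in the proof of Lemma~\ref{lem:ls:hat}. There is therefore no ``paper's own proof'' to compare against.

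Your derivation is a correct first-order perturbation analysis for least squares. The reduction $B = XW$, $E = \Xhat - XW$, $u = W^T\wls$ is exactly the right way to absorb the rotation, and the identity
\[
(B+E)^T(B+E)\,\Delta = E^T \rls - B^T E\, u - E^T E\, u
\]
is derived correctly from subtracting normal equations. Your first-order bound $\epsilon\,\kappa_2(B)(1 + \nuls\tan\thetals) + O(\epsilon^2)$ is in fact \emph{tighter} than the stated inequality, which is consistent with the fact that the original Golub--Van Loan theorem also allows a perturbation of the right-hand side $\avec$; the extra $\epsilon\,\nuls/\cos\thetals$ term in~\eqref{eq:GVLbound} is what that right-hand-side perturbation would contribute, and here it is vacuous. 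Your a fortiori conclusion is therefore valid. The only point that would need care in a fully rigorous write-up is, as you note, the $O(\epsilon^2)$ bookkeeping: the Neumann expansion of $\bigl((B+E)^T(B+E)\bigr)^{-1}$ is valid as long as $\|(B^TB)^{-1}(B^TE + E^TB + E^TE)\| < 1$, which follows from $\|E\| < \sigma_d(B)$ with a constant that degrades as $\|E\|/\sigma_d(B) \to 1$, so the implicit constant in $O(\epsilon^2)$ depends on how close that ratio is to one.
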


To apply Theorem~\ref{thm:gvl}, we will first need to show
that the condition in \eqref{eq:gvl:errorgrowth} holds
with high probability, which we show in Lemma~\ref{lem:gvlerrcond}.
We will then show, using Lemma~\ref{lem:gvlerrcond}
and Lemma~\ref{lem:gvlresidual}, that
the right-hand side of \eqref{eq:GVLbound} is also bounded above
by $cn^{-1/2} \log n$ with high probability.

\begin{lemma} \label{lem:gvlerrcond}
	With notation as above, \eqref{eq:gvl:errorgrowth} holds with
	probability at least $1-cn^{-2}$.
	That is, with high probability, there exists an orthogonal matrix
	$W \in \R^{d \times d}$ such that
	\begin{equation} \label{eq:gvlcond:specbound}
		\| \Xhat - XW \| < \sigma_d(XW) .
	\end{equation}
	Further, 
	\begin{equation} \label{eq:gvlcond:epsilon}
		\frac{ \| \Xhat - XW \| }{ \| XW \| } 
		\le \frac{ c \log n }{ \sqrt{n} } \text{ w.h.p. }
	\end{equation}
\end{lemma}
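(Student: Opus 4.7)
My plan is to combine Lemma~\ref{lem:perfect} with Lemma~\ref{lem:Psvals}, the two auxiliary results already in hand, and let the inequality $\|M\| \le \sqrt{n}\,\|M\|_{2\to\infty}$ bridge the row-wise bound on $\Xhat-XW$ to an operator-norm bound. More precisely, for any $M \in \R^{n\times d}$ we have $\|M\|^2 \le \|M\|_F^2 = \sum_{i=1}^n \|M_{i,\cdot}\|^2 \le n\,\|M\|_{2\to\infty}^2$, and so the row-wise control afforded by Lemma~\ref{lem:perfect} immediately yields
$$ \| \Xhat - XW \| \;\le\; \sqrt{n}\,\| \Xhat - XW \|_{2\to\infty} \;\le\; c\log n \text{ w.h.p.} $$
This is the ``numerator'' estimate, and it is the only step where the $\log n$ factor enters.

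For the ``denominator'' side I would invoke Lemma~\ref{lem:Psvals} applied to $X$: with high probability, $\sigma_d(X) \ge c\sqrt{n}$ and $\sigma_1(X) \le \sqrt{n}$. Since $W$ is orthogonal, the singular values of $XW$ coincide with those of $X$, so on the same high-probability event we have $\sigma_d(XW) \ge c\sqrt{n}$ and $\|XW\| = \sigma_1(XW) \ge \sigma_d(XW) \ge c\sqrt{n}$. Combining with the numerator bound above, for all $n$ large enough
$$ \| \Xhat - XW \| \;\le\; c\log n \;<\; c'\sqrt{n} \;\le\; \sigma_d(XW), $$
which is exactly condition~\eqref{eq:gvlcond:specbound}. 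Dividing the numerator bound by the lower bound $\|XW\| \ge c\sqrt{n}$ gives
$$ \frac{\|\Xhat - XW\|}{\|XW\|} \;\le\; \frac{c\log n}{c'\sqrt{n}} \;\le\; \frac{c''\log n}{\sqrt{n}}, $$
which is exactly~\eqref{eq:gvlcond:epsilon}.

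To finish, I would apply a union bound over the event in Lemma~\ref{lem:perfect} and the event in Lemma~\ref{lem:Psvals}, each of which fails with probability at most $cn^{-2}$, so both conclusions hold simultaneously with probability at least $1-c'n^{-2}$, which matches the high-probability convention of the paper. I do not foresee any real obstacle here: the only delicate point is being explicit that ``with high probability'' requires a union bound over the two auxiliary events, and that the implicit constants in Lemmas~\ref{lem:perfect} and~\ref{lem:Psvals} can be absorbed into the generic constant $c$ (whose value may change from line to line, as stipulated in the notation section). The looseness of the bound $\|M\| \le \sqrt{n}\,\|M\|_{2\to\infty}$ is more than enough here, since we only need to beat $\sigma_d(XW) \asymp \sqrt{n}$ by a $\log n / \sqrt{n}$ factor.
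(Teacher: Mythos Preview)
Your proposal is correct and essentially identical to the paper's own proof: both bound $\|\Xhat - XW\|$ by passing through the Frobenius norm and summing the row-wise bounds from Lemma~\ref{lem:perfect} to get $c\log n$, then invoke Lemma~\ref{lem:Psvals} to lower-bound $\sigma_d(XW)$ by $c\sqrt{n}$ and divide. The only cosmetic difference is that you phrase the numerator step via $\|M\| \le \sqrt{n}\,\|M\|_{2\to\infty}$ while the paper writes out the sum of squared row norms directly, and you appeal to $\sigma_d(X)$ from Lemma~\ref{lem:Psvals} directly rather than routing through $\sigma_d(P)^{1/2}$.
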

\begin{proof}
	Let $W \in \R^{d \times d}$ be the orthogonal matrix guaranteed by
	Lemma~\ref{lem:perfect}. We begin by observing that
	\begin{equation*}
		\| \Xhat - XW \|^2 \le \| \Xhat - XW \|_F^2
		= \sum_{i=1}^n \| \Xhat_i - W^{T} X_i \|^2
		\le c \log^2 n \text{ w.h.p. },
	\end{equation*}
	where the last inequality follows from Lemma~\ref{lem:perfect}.
	By the construction of the RDPG, we can write
	$XW = \UP \SP^{1/2}W$, from which
	$\sigma_d(XW) = \sigma_d^{1/2}(P) \ge c \sqrt{n}$,
	with the inequality holding high probability by Lemma~\ref{lem:Psvals}.
	This establishes~\eqref{eq:gvlcond:specbound} immediately,
	and it follows that
	$$ \frac{ \| \Xhat - XW \| }{ \| XW \| } \le \frac{ c \log n }{ \sqrt{n} }
	\text{ w.h.p. }, $$
	which proves \eqref{eq:gvlcond:epsilon}.
\end{proof}

\begin{lemma} \label{lem:gvlresidual}
	With notation as in Theorem~\ref{thm:gvl},
	there exists a constant $0 < \gamma \le 1$,
	not depending on $n$, such that
	$\cos \thetals \ge \gamma$ with high probability.
	That is,  there exists a constant $0 \le \gamma' < 1$ such that
	\begin{equation} \label{eq:gvlcond:relativeresidual}
		\sin \thetals = \frac{ \| X \wls - \avec \| }{ \| \avec \| }
		\le \gamma' \text{ w.h.p. }
	\end{equation}
\end{lemma}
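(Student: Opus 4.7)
The plan is to show that $\sin \thetals = \|X\wls - \avec\|/\|\avec\|$ is bounded above by a constant $\gamma' < 1$ with high probability, via an upper bound on the numerator using optimality of $\wls$ and a lower bound on the denominator via Hoeffding concentration.

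First, I would observe that $\wtrue \in \R^d$ is feasible in the least-squares problem~\eqref{eq:def:wls}, so optimality of $\wls$ yields
\[ \|X\wls - \avec\|^2 \;\le\; \|X\wtrue - \avec\|^2 \;=\; \sum_{i=1}^n (a_i - X_i^T\wtrue)^2. \]
Conditional on $X$, the summands on the right are independent, lie in $[0,1]$, and have conditional means $X_i^T\wtrue(1 - X_i^T\wtrue)$. Similarly, because $a_i \in \{0,1\}$, we have $\|\avec\|^2 = \sum_{i=1}^n a_i$, which, conditional on $X$, is a sum of independent Bernoullis with means $X_i^T\wtrue$. Two applications of Hoeffding's inequality in each case---first conditional on $X$, then over the i.i.d.\ draws $X_i \sim F$---give
\[ \|X\wtrue - \avec\|^2 \le n\mu_1 + c\sqrt{n \log n} \quad\text{and}\quad \|\avec\|^2 \ge n\mu_0 - c\sqrt{n \log n} \quad\text{w.h.p.,}\]
where $\mu_0 = \E[X_1^T\wtrue]$ and $\mu_1 = \E[X_1^T\wtrue(1 - X_1^T\wtrue)]$; a union bound over the four concentration events suffices.

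Combining these, I would conclude
\[ \sin^2 \thetals \;\le\; \frac{n\mu_1 + O(\sqrt{n \log n})}{n\mu_0 - O(\sqrt{n \log n})} \;\longrightarrow\; \frac{\mu_1}{\mu_0} \;=\; 1 - \frac{\E[(X_1^T\wtrue)^2]}{\E[X_1^T\wtrue]}, \]
which is strictly less than $1$ so long as $\wtrue \neq 0$ and $F$ is nondegenerate in the direction of $\wtrue$, in the sense that $\E[(X_1^T\wtrue)^2] > 0$. Both positivity conditions hold automatically under the standing hypothesis that edge probabilities $X_i^T\wtrue$ lie strictly between $0$ and $1$. Picking any $\gamma'$ strictly between this limit and $1$ then gives $\sin \thetals \le \gamma' < 1$ for all sufficiently large $n$ w.h.p., and $\gamma = \sqrt{1-(\gamma')^2}$ yields the stated lower bound on $\cos \thetals$.

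The main obstacle, I expect, is cleanly executing the two-stage Hoeffding argument so that the tail probability of each concentration event is controlled at the $O(n^{-2})$ level required by the paper's ``w.h.p.''\ convention, and assembling the union bound accordingly; the rest is elementary arithmetic combined with the nondegeneracy of $F$.
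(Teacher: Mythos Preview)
Your proposal is correct and follows a genuinely different route from the paper. Both arguments begin from the same optimality observation $\|X\wls-\avec\|\le\|X\wtrue-\avec\|$, but then diverge. The paper works with the single random variable $Y=\|\avec\|^2-\|\zvec\|^2=2\sum_i a_iX_i^T\wtrue-\|X\wtrue\|^2$ and shows $Y\ge\epsilon\|\avec\|^2$ by a conditioning argument: first establishing a deterministic event $G_n$ about the latent positions (comparing $\|X\wtrue\|_2$ with $\|X\wtrue\|_1$) via Hoeffding over the $X_i$, then applying Hoeffding over the $a_i$ conditional on $G_n$. You instead decouple numerator and denominator, bounding each separately by a clean two-stage Hoeffding. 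Your approach is arguably more transparent and yields the explicit limiting ratio $\mu_1/\mu_0=1-\E[(X_1^T\wtrue)^2]/\E[X_1^T\wtrue]$, which the paper's argument does not isolate; the paper's route has the minor advantage of working with a single concentration on the difference, but at the cost of a more delicate conditioning setup. One small point: your appeal to ``the standing hypothesis that edge probabilities lie strictly between $0$ and $1$'' is not quite a standing assumption in the least-squares part of the paper; however, all you actually need is $\E[(X_1^T\wtrue)^2]>0$, which follows immediately from $\mu_0=\E[X_1^T\wtrue]>0$ (an assumption the paper also makes implicitly via $c_{F,\wtrue}>0$) and the nonnegativity of $X_1^T\wtrue$.
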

\begin{proof}
	To prove \eqref{eq:gvlcond:relativeresidual},
	we begin by noting that, by orthogonality of $ W $, we have $ \wls = \arg \min \| X \wls - \avec \| \Leftrightarrow  W^{T} \wls = \arg \min \| XW w - \avec \|$, hence here~\citet[Theorem 5.3.1]{GolVan2012} has been stated for matrix $ XW $ and its perturbation $ \Xhat $. Furthermore,  since by definition, we have
	$ \| X \wls - \avec \| \le \| X\wtrue - \avec\|$, hence it will suffice for us to show that
	there exists a constant $\gamma \in [0,1)$ for which
	$$ \frac{ \| X \wtrue - \avec \| }{ \| \avec \| } \le \gamma $$
	with high probability, say, with probability of failure $O(n^{-2})$.
	Define the noise vector $\zvec = \avec - X \wtrue$.
	We will show that there exists a constant $\epsilon > 0$, small enough,
	such that
	\begin{equation} \label{eq:sinbound}
		\Pr\left[ \| \avec \|^2 - \| \zvec \|^2 < \epsilon \| \avec \|^2 \right]
		= O( n^{-2} ), \end{equation}
	so that with high probability, $\sin \thetals \le 1-\epsilon$.
	Our argument will proceed in two steps.
	Fix some small $0 < \epsilon < 1$ and let $G_n$ denote the event that
	\begin{equation} \label{eq:Gn}
		\left( \| X\wtrue \|_2
		- \epsilon \frac{ \| X \wtrue \|_1 }{ \| X\wtrue \|_2 }
		\right)
		> 40 \log n.
	\end{equation}
	Defining $Y = \| \avec \|^2 - \| \zvec \|^2 = 2\sum_{i=1}^n a_i X_i^T \wtrue - \| X \wtrue \|^2$,
	let $E_n$ denote the event that
	\begin{equation} \label{eq:En}
		Y - \epsilon \| \avec \|^2
		> \frac{1}{2} \E\left[ Y - \epsilon \| \avec \|^2 \right].
	\end{equation}
	We will show, firstly, that $\Pr( G_n^c ) = O( n^{-2} )$. We further show that $ \E\left( Y - \epsilon \| \avec \|^2 \mid G_{n} \right) > 0 $, which in turn implies that, for large enough $ n $, we have $ \E\left( Y - \epsilon \| \avec \|^2 \right) > 0 $.
	Then, conditioning on the event $G_n$, we will show that
	$\Pr( E_n^c \mid G_n ) = O(n^{-2})$. Finally, denoting the event in~\eqref{eq:sinbound} by $ F_{n} $, we have $ \Pr( F_n^{c} ) \leq \Pr( E_n^{c} ) \leq \Pr( E_n^c \mid G_n ) + \Pr( G_n^c ) = O(n^{-2})$ and our desired result will follow. 
	
	We begin by observing that
	$ \E \| X \wtrue \|^2 = n \E X_1^T \wtrue$,
	since the $X_i^T \wtrue$ are identically distributed.
	Thus, $\E \| X \wtrue \|^2 = \cfw n$, where $0 < \cfw < 1$ is a constant
	that depends only on $\wtrue$ and the latent position distribution $F$.
	Since the $X_i^T \wtrue$ are independent and identically distributed
	(because $\wtrue$ is not random) and 
	$0 \le X_i^T \wtrue \le 1$ almost surely,
	an application of Hoeffding's inequality shows that
	$$ \Pr\left[ \| X\wtrue \|_2^2 - \E \| X \wtrue \|_2^2
	\le -\sqrt{n} \log^{1/2} n \right]
	\le n^{-2}, $$
	whence we have that
	$ \| X \wtrue \|_2^2 \ge \cfw n - \sqrt{n} \log^{1/2} n $
	with high probability, and thus, for suitably large $n$,
	we have $\| X\wtrue \|_2 \ge \sqrt{\cfw  n/2}$ with high probability. Hence, it implies that with high probability
	\begin{equation*} \begin{aligned}
			\| X \wtrue \|_2 - \epsilon \frac{ \| X\wtrue \|_1 }{ \| X \wtrue \|_2 }
			& \ge \sqrt{\frac{\cfw  n}{2}} - \epsilon \sqrt{n} = \left(\sqrt{\frac{\cfw}{2}} - \epsilon\right) \sqrt{n},
	\end{aligned} \end{equation*}
	where $ \epsilon $ is chosen such that $ \epsilon < \sqrt{\cfw/2}$.
	since $\| X \wtrue \|_1 \le \sqrt{n} \| X \wtrue \|_2$ with probability $1$ (by equivalence of norms).
	It follows that for suitably large $n$,
	we have
	\begin{equation} \label{eq:orderlogsqrt}
		\left( \| X \wtrue \|_2 - \epsilon \frac{ \| X\wtrue \|_1 }
		{ \| X \wtrue \|_2 }
		\right)^2
		> 40\log n \text { w.h.p. },
	\end{equation}
	that is, event $G_n$ holds with high probability.
	
	Now, let us condition on this event $G_n$.
	Recalling our definition of
	$Y = 2\sum_{i=1}^n a_i X_i^T \wtrue - \| X \wtrue \|^2$ above. Condition on $ G_n $, i.e., when $ \{X_{i}\}_{i=1}^{n} $ are fixed, we have \begin{align*}
		\E(Y) =  2\sum_{i=1}^n \E(a_i) X_i^T \wtrue - \| X \wtrue \|^2  = 2\sum_{i=1}^n (X_i^T \wtrue)^{2} - \| X \wtrue \|^2 = \| X \wtrue \|^{2},
	\end{align*} 
	and
	\begin{align*}
		\E(\| \avec \|^2 ) = \sum_{i=1}^n \E(a_i^{2}) = \sum_{i=1}^n \E(a_i) = \sum_{i=1}^n X_i^T \wtrue = \| X \wtrue \|_{1}.
	\end{align*}
	Note that on $ G_{n} $, we have $ \E\left( Y - \epsilon \| \avec \|^2 \mid G_{n} \right) > \sqrt{32 \log n} \| X \wtrue \|_2 > 0 $. Furthermore, noting $ a_{i} \in \{0,1\} $, we have
	\begin{align*}
		2 a_i X_i^T \wtrue - (X_{i}^{T} \wtrue )^2 - \epsilon a_{i}^{2} \in \left\{- (X_{i}^{T} \wtrue )^2, 2 X_i^T \wtrue - (X_{i}^{T} \wtrue )^2 - \epsilon\right\}.
	\end{align*}
	Recall that conditioned on $ G_{n} $, we have $ \| X\wtrue \|_{2}^{2} \ge \cfw  n/2$, where $ \cfw   = \E X_1^T \wtrue $. Since $ \epsilon < \sqrt{\cfw/2} $, it follows that, on $ G_{n} $,
	\begin{align*}
		\sum_{i=1}^{n} (2 X_i^T \wtrue - \epsilon)^{2} \leq \sum_{i=1}^{n} 4 (X_i^T \wtrue)^{2} + \epsilon^{2} = 4 \| X \wtrue \|_{2}^{2} + n \epsilon^{2} \leq 5 \| X \wtrue \|_{2}^{2}.
	\end{align*}
	Now, an application of Hoeffding's inequality gives
	\begin{equation*} \begin{aligned}
			\Pr\left[
			Y - \epsilon \| \avec \|^2
			\le \frac{1}{2} \E\left( Y - \epsilon \| \avec \|^2 \right)
			\right]
			&\le \exp\left\{ \frac{ - \left[\E\left( Y - \epsilon \| \avec \|^2 \right) \right]^{2} }
			{ 20 \| X \wtrue \|_2^2 } \right\}
			= \exp\left\{ \frac{
				- \left( \| X \wtrue \|_2^2 - \epsilon \| X\wtrue \|_1 \right)^2 }
			{ 20 \| X \wtrue \|_2^2 } \right\} \\
			&= \exp\left\{ -\frac{ \left( \| X\wtrue \|_2
				- \epsilon \frac{ \| X\wtrue \|_1 }{ \| X\wtrue \|_2 }  \right)^2 }
			{ 20 } \right\}
			\le n^{-2},
	\end{aligned} \end{equation*}
	where the last inequality follows from the fact that event $G_n$ holds.
	Thus, event $E_n$ defined in \eqref{eq:En} and $G_n$ both hold
	with high probability, completing our proof.
\end{proof}

\begin{lemma} \label{lem:ls:hat}
	With notation as in Theorem~\ref{thm:gvl}, with high probability
	there exists orthogonal matrix $W \in \R^{d \times d}$ such that
	$$ \| W \whatls - \wls \| \le c n^{-1/2} \log n . $$
\end{lemma}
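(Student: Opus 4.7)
The plan is to invoke Theorem~\ref{thm:gvl} directly, taking $W \in \R^{d \times d}$ to be the orthogonal matrix produced by Lemma~\ref{lem:perfect}. Because $W$ is orthogonal, $\|W\whatls - \wls\| = \|\whatls - W^T\wls\|$, so it suffices to control the absolute error on the left-hand side of \eqref{eq:GVLbound} after multiplying through by $\|W^T\wls\|$.

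First, I would verify the hypothesis \eqref{eq:gvl:errorgrowth}, which is precisely the content of Lemma~\ref{lem:gvlerrcond}; that lemma also supplies the key estimate $\epsilon = \|\Xhat - XW\|/\|XW\| \le c n^{-1/2}\log n$ with high probability. Next, I would check that each of the quantities appearing as a multiplier of $\epsilon$ in \eqref{eq:GVLbound} is $O(1)$ w.h.p. The condition number $\kappa_2(XW) = \kappa_2(X) = \sigma_1(X)/\sigma_d(X)$ is bounded by an absolute constant w.h.p.\ because Lemma~\ref{lem:Psvals} gives $\sigma_1(X) \le \sqrt{n}$ and $\sigma_d(X) \ge c\sqrt{n}$. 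For $\nuls = \|X\wls\|/(\sigma_d(XW)\|W^T\wls\|)$, one bounds $\|X\wls\| \le \sigma_1(X)\|\wls\|$ and uses orthogonality of $W$ to note $\|W^T\wls\| = \|\wls\|$, so that $\nuls \le \kappa_2(X) = O(1)$. The angle $\thetals$ is controlled by Lemma~\ref{lem:gvlresidual}, which shows $\cos\thetals$ is bounded below by a positive constant (so, equivalently, $\tan\thetals$ is $O(1)$) w.h.p. Combining these, \eqref{eq:GVLbound} yields $\|\whatls - W^T\wls\| \le c \epsilon \|W^T\wls\| + O(\epsilon^2)$ on the intersection of the high-probability events.

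Finally, I would bound $\|W^T\wls\| = \|\wls\|$ itself: by the normal equations, $\wls = (X^T X)^{-1} X^T \avec$, so $\|\wls\| \le \|\avec\|/\sigma_d(X) \le \sqrt{n}/(c\sqrt{n}) = O(1)$ w.h.p., using Lemma~\ref{lem:Psvals} together with the crude bound $\|\avec\| \le \sqrt{n}$ valid since $\avec \in \{0,1\}^n$. A union bound over the constantly many high-probability events invoked then delivers $\|W\whatls - \wls\| \le c n^{-1/2}\log n$, as required. The main obstacle is not deep; it is essentially bookkeeping to collect all of the preceding high-probability estimates and to confirm that the $O(\epsilon^2)$ remainder in \eqref{eq:GVLbound} is dominated by the $O(\epsilon)$ leading term once $n$ is large enough that $\epsilon = o(1)$.
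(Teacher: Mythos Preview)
Your proposal is correct and follows essentially the same approach as the paper: invoke Theorem~\ref{thm:gvl} via Lemma~\ref{lem:gvlerrcond}, bound $\kappa_2(XW)$ and $\nuls$ using Lemma~\ref{lem:Psvals}, and control $\thetals$ via Lemma~\ref{lem:gvlresidual}. If anything, you are slightly more careful than the paper in explicitly bounding $\|\wls\| = O(1)$ to pass from the relative error in~\eqref{eq:GVLbound} to the absolute error in the lemma's statement, a step the paper leaves implicit.
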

\begin{proof}
	This is a direct result of Theorem~\ref{thm:gvl} and the preceding Lemmas,
	once we establish bounds on $\kappa_2(XW)$ and
	$$ \nuls = \frac{ \| XW \wls \| }{ \sigma_d(XW) \| \wls \| }. $$
	By Lemma~\ref{lem:Psvals}, we have $\sigma_d(XW) \ge c\sqrt{n}$ and $\sigma_1(XW) \le \sqrt{n}$, 
	with high probability.
	It follows immediately that $\kappa_2(XW) \le 1/c$ a.s.a.a.
	Further, 
	\begin{align*}
		\nuls &= \frac{ \| X \wls \| }{ \sigma_d(XW) \| W^T \wls \| } = \frac{ \| X W W^T \wls \| }{ \sigma_d(XW) \| W^T \wls \| } =\frac{ \| X W z_{\text{LS}} \| }{ \sigma_d(XW) \| z_{\text{LS}} \| } \leq \frac{ \| X W\|}{ \sigma_d(XW)} = \kappa_2(XW) \leq 1/c, 
	\end{align*}
	with high probability. 
	
	By Lemma~\ref{lem:gvlerrcond}, we are assured that Theorem~\ref{thm:gvl}
	applies a.s.a.a.\ and Lemmas~\ref{lem:gvlerrcond} and \ref{lem:gvlresidual}
	ensure that the each of $(\cos \thetals)^{-1}$ and $\tan \thetals$
	are bounded by constants with high probability.
	Thus, applying Theorem~\ref{thm:gvl} with $\epsilon = cn^{-1/2} \log n$,
	it follows that the right-hand side of
	Equation~\ref{eq:GVLbound} is bounded by $c n^{-1/2} \log n$ with
	high probability, and the result follows.
\end{proof}

\subsection{Bounding $\| \wls - \wtrue \|$ }
\label{subsec:true}
In this section, we will show that $\wls$
is close to the true latent position $\wtrue$.
A combination of this result with Lemma~\ref{lem:ls:hat}
will yield Theorem~\ref{thm:lsrate}.
\begin{lemma} \label{lem:ls:true}
	Condition on $\wtrue$, with high probability, we have
	$$ \| \wls - \wtrue \| \le \frac{ c \log n }{ \sqrt{n} }. $$
\end{lemma}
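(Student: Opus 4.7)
The plan is to work from the closed-form expression for the least-squares solution and then separately control the noise and the conditioning of the design matrix. Since $X$ has full column rank with high probability (by Lemma~\ref{lem:Psvals}), the solution of~\eqref{eq:def:wls} can be written as $\wls = (X^T X)^{-1} X^T \avec$. Decomposing $\avec = X \wtrue + \zvec$, where $\zvec_i = a_i - X_i^T \wtrue$ is a mean-zero noise vector conditional on $X$ and $\wtrue$, we immediately obtain the identity
\begin{equation*}
\wls - \wtrue = (X^T X)^{-1} X^T \zvec,
\end{equation*}
so that
\begin{equation*}
\| \wls - \wtrue \| \le \| (X^T X)^{-1} \| \cdot \| X^T \zvec \|.
\end{equation*}

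Next I would bound the two factors separately. For the first factor, Lemma~\ref{lem:Psvals} gives $\sigma_d(X) \ge c\sqrt{n}$ with high probability, hence $\sigma_d(X^T X) \ge cn$ and $\|(X^T X)^{-1}\| \le c/n$ with high probability. For the second factor, fix a coordinate $j \in [d]$ and observe that $[X^T \zvec]_j = \sum_{i=1}^n [X_i]_j (a_i - X_i^T \wtrue)$ is a sum of $n$ independent, mean-zero random variables bounded in absolute value by $1$, since $\|X_i\|_\infty \le \|X_i\| \le 1$ on $\supp F$ and $|a_i - X_i^T \wtrue| \le 1$. A standard Hoeffding inequality applied coordinate-wise, followed by a union bound over the (fixed) $d$ coordinates, yields $\|X^T \zvec\|_\infty \le c\sqrt{n \log n}$ with probability at least $1 - c n^{-2}$, and hence $\|X^T \zvec\| \le \sqrt{d}\,\|X^T \zvec\|_\infty \le c\sqrt{n \log n}$ with high probability.

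Combining the two bounds via a union bound over the relevant high-probability events gives
\begin{equation*}
\| \wls - \wtrue \| \le \frac{c}{n} \cdot c\sqrt{n \log n} \le \frac{c \log n}{\sqrt{n}} \text{ w.h.p.},
\end{equation*}
which is the claimed inequality (indeed, this argument actually gives the slightly sharper rate $c\sqrt{\log n/n}$, but it suffices to state it in the weaker form to match Lemma~\ref{lem:ls:hat}). The only mild obstacle is being careful that the Hoeffding bound is applied conditionally on $X$ and $\wtrue$, so that the $a_i$ are genuinely independent with bounded range, and that the high-probability event for $\sigma_d(X)$ from Lemma~\ref{lem:Psvals} is compatible (via union bound) with the concentration event for the noise. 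With the triangle inequality $\|W\whatls - \wtrue\| \le \|W\whatls - \wls\| + \|\wls - \wtrue\|$ and Lemma~\ref{lem:ls:hat}, this completes the proof of Theorem~\ref{thm:lsrate}.
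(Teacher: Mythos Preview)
Your proof is correct and follows essentially the same approach as the paper: both reduce the problem to controlling $\|X^T\zvec\|/\sigma_d(P)$, invoke Lemma~\ref{lem:Psvals} for the denominator, and apply Hoeffding's inequality coordinate-wise (followed by a union bound over the $d$ coordinates) for the numerator. The only cosmetic difference is that you reach the key inequality via the explicit normal-equations formula $\wls - \wtrue = (X^TX)^{-1}X^T\zvec$, whereas the paper obtains the equivalent bound (up to a factor of~$2$) from the optimality of $\wls$ combined with Cauchy--Schwarz; your route is slightly more direct and, as you note, actually delivers the marginally sharper rate $c\sqrt{\log n / n}$.
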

\begin{proof}
	As noted previously, by definition of $\wls$, we have
	$$ \| XW \wls - \avec \|^2 \le \| X \wtrue - \avec \|^2 = \| \zvec \|^2, $$
	whence plugging in $\avec = X\wtrue + \zvec$ yields
	$ \| XW \wls - X\wtrue - \zvec \|^2 \le \| \zvec \|^2 $.
	Thus,
	\begin{equation} \label{eq:subtractzvec}
		\| XW \wls - X \wtrue \|^2 \le 2\zvec^T X(\wls - \wtrue).
	\end{equation}
	Since $X$ has full column rank,
	it holds with high probability that
	$\| X(W \wls - \wtrue) \| \ge \sigma_d(XW) \| W\wls - \wtrue \|.$
	Combining this fact with \eqref{eq:subtractzvec} and using $\sigma_d^2(X) = \sigma_d(P)$, gives
	$$ \| W\wls - \wtrue \|^2
	\le \frac{ 2\zvec^T X(W\wls - \wtrue) }{ \sigma_d(P) }.$$
	Applying the Cauchy-Schwartz inequality and dividing by $\|W\wls - \wtrue\|$, we obtain
	\begin{align*}
		\| W\wls - \wtrue \| \le \frac{ 2 \| X^T \zvec \| }{ \sigma_d(P) }.
	\end{align*}
	Thus, it remains for us to show that $\| X^T \zvec \|$
	grows as at most $O( \sqrt{n} \log n )$, from which
	Lemma~\ref{lem:Psvals} will yield our desired growth rate.
	Expanding, we have
	\begin{equation} \label{eq:etasum}
		\| X^T \zvec \|_2^2
		= \sum_{j=1}^d \left( \sum_{i=1}^n z_i X_{i,j} \right)^2.
	\end{equation}
	Fixing some $j \in [d]$, note that
	$$ \E z_i X_{i,j} = \E (a_i - X_i^T \wtrue) X_{i,j}
	= 0, $$
	and
	$| z_i X_{i,j}| \le |a_i - X_i^T \wtrue||X_{i,j}| \le 1$,
	so that
	$\{ z_i X_{i,j} \}_{i=1}^n$ is a sum of $n$
	independent bounded zero-mean random variables. A simple application of Hoeffding's inequality thus implies that
	with probability at least $1-O(n^{-2})$,
	$| \sum_{i=1}^n z_i X_{i,j} | \le 2 \sqrt{n} \log n$.
	A union bound over all $d$ sums in Equation~\eqref{eq:etasum},
	since $d$ is assumed to be constant in $n$,
	we have that
	$\| X^T \zvec \|_2^2 \le 4dn \log^2 n$
	with high probability,
	and taking square roots completes the proof.
\end{proof}

\section{Proof of Theorem \ref{thm:mlrate}}
\label{sec:proof:mlrate}

We remind the reader that $\whatml$ denotes the
optimal solution
$$ \whatml = \arg \max_{w \in \calThat_\epsilon} \ellhat(w), $$
where
$\ellhat(w) = \sum_{i=1}^n a_i \log \Xhat_i^T w +(1-a_i) \log (1-\Xhat_i^T w).$
To prove Theorem~\ref{thm:mlrate}, we will
apply a standard argument
from convex optimization and use the properties of the set
$\calThat_\epsilon$ to show that
for suitably large $n$,
$$ \| W \whatml - \wtrue \|
\le \frac{ \| \nabla \ellhat( W^T \wtrue ) \| }{ c n }, $$
where $W \in \R^{d \times d}$ is the orthogonal matrix guaranteed
by Lemma~\ref{lem:perfect}. This is proven in Lemma~\ref{lem:cvxopt}.
We then show in Lemma~\ref{lem:nabla} that
$$\| \nabla \ellhat( W^T \wtrue ) \| \le c \sqrt{n} \log n \text{ w.h.p. } $$
Combining these two facts establishes the theorem.

\begin{lemma} \label{lem:cvxopt}
	With notation as above, under the assumptions of Theorem~\ref{thm:mlrate},
	for $n$ suitably large, there exists an orthogonal matrix $W \in \R^{d \times d}$ such that with probability at least $1 - cn^{-2}$,
	$$ \| W \whatml - \wtrue \|
	\le \frac{ \| \nabla \ellhat( W^T \wtrue ) \| }{ c n }. $$
\end{lemma}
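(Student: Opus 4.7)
The plan is to run a standard strongly-concave optimization argument on $\calThat_\epsilon$. Let $w_0 = W^T \wtrue$, where $W$ is the orthogonal matrix from Lemma~\ref{lem:perfect}. The two ingredients I need are (a) $\ellhat$ is $cn$-strongly concave on $\calThat_\epsilon$ w.h.p., and (b) $w_0 \in \calThat_\epsilon$ for all sufficiently large $n$. Given (a) and (b), strong concavity applied at $w_0$ gives
\[
\ellhat(\whatml) \le \ellhat(w_0) + \nabla\ellhat(w_0)^T(\whatml - w_0) - \tfrac{cn}{2}\|\whatml - w_0\|^2 .
\]
Combining with $\ellhat(\whatml) \ge \ellhat(w_0)$, which holds since $w_0 \in \calThat_\epsilon$ and $\whatml$ is the constrained maximizer, and applying Cauchy--Schwarz to the gradient term, yields $\|\whatml - w_0\| \le 2\|\nabla\ellhat(w_0)\|/(cn)$, which is the lemma after rewriting $\|\whatml - W^T\wtrue\| = \|W\whatml - \wtrue\|$ and absorbing the factor of $2$ into $c$.

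For (a), a direct computation gives
\[
-\nabla^2\ellhat(w) = \sum_{i=1}^n \left(\frac{a_i}{(\Xhat_i^T w)^2} + \frac{1-a_i}{(1-\Xhat_i^T w)^2}\right)\Xhat_i\Xhat_i^T .
\]
On $\calThat_\epsilon$ both $\Xhat_i^T w$ and $1-\Xhat_i^T w$ lie in $[\epsilon,1-\epsilon]\subset(0,1)$, and since $a_i \in \{0,1\}$, the scalar coefficient is at least $1$; thus $-\nabla^2\ellhat(w) \succeq \Xhat^T\Xhat$. Lemma~\ref{lem:Psvals} gives $\sigma_d(X) \ge c\sqrt{n}$ w.h.p., and Lemma~\ref{lem:perfect} (summed in Frobenius form) gives $\|\Xhat - XW\| \le c\log n$; Weyl's inequality then gives $\sigma_d(\Xhat) \ge c\sqrt{n}$ w.h.p., so $\Xhat^T\Xhat \succeq cn \cdot I$, as required.

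For (b), by compactness of $\supp F$ and the strict inequality $\epsilon < x^T y < 1-\epsilon$ on $\supp F \times \supp F$, there exists a margin $\eta > 0$ such that $X_i^T\wtrue \in [\epsilon+\eta,\,1-\epsilon-\eta]$ for every $i$, almost surely. Lemma~\ref{lem:perfect} then yields the uniform estimate $|\Xhat_i^T(W^T\wtrue) - X_i^T\wtrue| \le \|\Xhat_i - W^T X_i\|\|\wtrue\| \le c\log n / \sqrt{n}$, which falls below $\eta$ for all $n$ large enough. Hence $\Xhat_i^T w_0 \in [\epsilon,1-\epsilon]$ for every $i$, i.e., $w_0 \in \calThat_\epsilon$ w.h.p.

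The main obstacle is really (b): using the strict-interior gap built into the hypothesis on $\supp F$ to absorb the $O(\log n / \sqrt{n})$ spectral perturbation of $\Xhat$ around $XW$ and thereby certify that the reference point $W^T\wtrue$ is actually feasible. Once this is clean, the Hessian computation giving strong concavity and the two-line convex-optimization inequality at the end are entirely mechanical.
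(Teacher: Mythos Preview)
Your proof is correct and follows essentially the same strongly-concave optimization argument as the paper. The paper reaches the inequality via the first-order optimality condition at $\whatml$ combined with the mean value theorem on $\nabla\ellhat$, whereas you use the equivalent quadratic upper bound together with $\ellhat(\whatml)\ge\ellhat(w_0)$; your version is in fact more careful, since you explicitly verify the feasibility $W^T\wtrue\in\calThat_\epsilon$ (via the compactness margin $\eta$ and Lemma~\ref{lem:perfect}) and spell out the Hessian lower bound $-\nabla^2\ellhat(w)\succeq\Xhat^T\Xhat\succeq cn\,I$, both of which the paper asserts without detail.
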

\begin{proof}
	We begin by noting that $\ellhat(w)$ is convex in its argument, and that $\whatml$ is the solution to a convex constrained optimization problem.
	Thus, by the optimality condition for convex constrained problems, along with the mean value theorem for vector-valued functions, we have
	\begin{align*} 
		\left( \nabla \ellhat(W^T \wtrue) \right)^T (W^T \wtrue - \whatml ) &= \left( \nabla \ellhat(W^T \wtrue) - \nabla \ellhat(\whatml) + \nabla \ellhat(\whatml) \right)^T (W^T \wtrue - \whatml ) \\
		&= \left( \nabla \ellhat(\whatml) \right)^T (W^T \wtrue - \whatml) \\
		&~~~~~~+ \int_0^1 (W^T \wtrue - \whatml)^T
		\nabla^2 \ellhat\left( W^T \wtrue + t(W^T \wtrue - \whatml) \right)
		(W^T \wtrue- \whatml) dt \\
		&\ge \| W^T \wtrue - \whatml \|^2
		\min_{w \in \calThat_\epsilon}
		\lambdamin \left( \nabla^2 \ellhat(w) \right).
	\end{align*}
	The constraint that $w \in \calThat_\epsilon$ implies that
	for suitably large $n$,
	$$ \min_{w \in \calThat_\epsilon}
	\lambdamin\left( \nabla^2 \ellhat(w) \right)
	\ge c n, $$
	with $c > 0$ depending on $\epsilon$ but not on $n$.
	By unitary invariance of the Euclidean norm and the Cauchy-Schwarz inequality, it follows that
	$$ \| \wtrue - W\whatml \|
	= \| W^T \wtrue - \whatml \|
	\le \frac{ \| \nabla \ellhat( W^T \wtrue ) \| }{ c n }, $$
	completing the proof.
\end{proof}

\begin{lemma} \label{lem:nabla}
	With notation as above,
	under the assumptions of Theorem~\ref{thm:mlrate},
	for all suitably large $n$,
	with probability at least $1-cn^{-2}$,
	$$ \| \nabla \ellhat( W^T \wtrue ) \| \le c \sqrt{n} \log n . $$
\end{lemma}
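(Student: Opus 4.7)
The plan is to compare $\nabla \ellhat(W^T \wtrue)$ against an idealized counterpart built from the true latent positions and to bound both via elementary concentration. A direct computation yields
$$ \nabla \ellhat(w) = \sum_{i=1}^n \frac{a_i - \Xhat_i^T w}{\Xhat_i^T w\,(1 - \Xhat_i^T w)} \Xhat_i, $$
so evaluating at $w = W^T \wtrue$ and writing $\phat_i = \Xhat_i^T W^T \wtrue$ and $p_i = X_i^T \wtrue$, I would compare to the idealized quantity
$$ g^{\star} = \sum_{i=1}^n \frac{a_i - p_i}{p_i(1-p_i)} X_i, $$
and bound via triangle inequality as $\| \nabla \ellhat(W^T \wtrue) \| \le \| \nabla \ellhat(W^T \wtrue) - W^T g^{\star} \| + \| g^{\star} \|$, using $\| W^T g^{\star} \| = \| g^{\star} \|$.

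To bound $\| g^{\star} \|$, I would condition on $X$. The assumption $\wtrue \in \supp F$ and the support condition in Theorem~\ref{thm:mlrate} guarantee $p_i \in (\epsilon, 1-\epsilon)$ almost surely, so each summand $(a_i - p_i)/(p_i(1-p_i))\, X_{i,j}$ is bounded and conditionally mean-zero. A coordinatewise Hoeffding bound with threshold $c\sqrt{n}\log n$ gives conditional failure probability at most $n^{-2}$ for $c$ large enough; a union bound over the $d$ (constant) coordinates and then taking expectation over $X$ yields $\| g^{\star} \| \le c\sqrt{n}\log n$ with high probability.

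For the perturbation term, I would first use Lemma~\ref{lem:perfect} together with $\|\wtrue\| \le 1$ to deduce that $|\phat_i - p_i| \le c\log n/\sqrt{n}$ uniformly in $i$ w.h.p.; in particular, for $n$ large enough, $\phat_i$ lies in a fixed compact subinterval of $(0,1)$ so that the map $x \mapsto (a-x)/(x(1-x))$ is uniformly Lipschitz on the range of relevant arguments. Then I would split
$$ \nabla \ellhat(W^T \wtrue) - W^T g^{\star}
= \sum_{i=1}^n \frac{a_i - \phat_i}{\phat_i(1-\phat_i)} (\Xhat_i - W^T X_i)
+ \sum_{i=1}^n \left[ \frac{a_i - \phat_i}{\phat_i(1-\phat_i)} - \frac{a_i - p_i}{p_i(1-p_i)} \right] W^T X_i. $$
The first sum is controlled entrywise by the uniform bound $\| \Xhat_i - W^T X_i \| \le c\log n/\sqrt{n}$ from Lemma~\ref{lem:perfect}; each of the $n$ summands contributes $O(\log n/\sqrt{n})$ and the total is $O(\sqrt{n}\log n)$. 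The second sum uses the Lipschitz property together with $|\phat_i - p_i| \le c\log n/\sqrt{n}$ and $\| W^T X_i \| \le 1$ to give the same rate. A final triangle inequality and union bound over the constantly many high-probability events complete the argument.

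The main obstacle is ensuring that the denominators $\phat_i(1-\phat_i)$ are bounded away from zero uniformly in $i$ with high probability, since the gradient is a nonlinear function of the estimated inner products. This is what lets me treat the rational expression as Lipschitz in its argument and keeps every summand bounded; without it, a single $\phat_i$ near $0$ or $1$ would blow up the sum. Resolving this requires combining the $2 \to \infty$ bound of Lemma~\ref{lem:perfect} with the support hypothesis on $F$ and $\wtrue$ that forces $p_i$ into $(\epsilon, 1-\epsilon)$, and then taking $n$ sufficiently large that the $O(\log n/\sqrt{n})$ perturbation cannot cross the boundary.
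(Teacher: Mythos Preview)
Your proposal is correct and follows essentially the same route as the paper: define the ideal gradient $g^\star$ built from the true latent positions (which equals the paper's $\nabla\ell(W^T\wtrue)$ up to the rotation $W$), bound it coordinatewise by Hoeffding, and control the perturbation via Lemma~\ref{lem:perfect} together with the support assumption that keeps $p_i$ and $\phat_i$ in a compact subinterval of $(0,1)$. Your split of the perturbation into ``change in $\Xhat_i$'' plus ``change in the rational coefficient'' is a slightly cleaner organization than the paper's split into expectation plus centered fluctuation, but both reduce to the same termwise $O(n^{-1/2}\log n)$ bound summed over $n$ terms.
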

\begin{proof}
	Let $W \in \R^{d \times d}$ be the orthogonal matrix
	whose existence is guaranteed by Lemma~\ref{lem:perfect},
	and denote by $W^T \supp F$ the set
	$\{ W^T x : x \in \supp F \}$.
	Analogously to $\ellhat(w)$,
	define $\ell : W^T \supp F \rightarrow \R$ by
	$$ \ell(w) = \sum_{i=1}^n a_i \log X_i^T W w
	+(1-a_i) \log (1-X_i^T W w). $$
	We involve $W$ in this function so that we may think of
	$\ellhat$ and $\ell$ as operating on the same set, with $W^T$ serving to rotate the support of $F$ to (approximately) agree with the estimates $\{ \Xhat_i \}_{i=1}^n$.
	
	By the triangle inequality,
	\begin{equation} \label{eq:gradtriangle}
		\| \nabla \ellhat( W^T\wtrue ) \|
		\le \| \nabla \ell( W^T\wtrue ) \| +
		\| \nabla \ellhat( W^T \wtrue ) - \nabla \ell( W^T \wtrue ) \|.
	\end{equation}
	We will show that
	both terms on the right hand side of \eqref{eq:gradtriangle}
	are bounded by $c \sqrt{n} \log n$ with high probability.
	
	Fix $j \in [d]$. We observe first that, conditioning on
	$\wtrue$ and $\{ X_i \}_{i=1}^n$,
	$$ (\nabla \ell( W^T \wtrue ) )_j = 
	\sum_{i=1}^n \left( \frac{ a_i }{ X_i^T \wtrue }
	- \frac{ 1-a_i }{ 1 - X_i^T \wtrue } \right) (X W)_{i,j}
	= \sum_{i=1}^n \frac{ (a_i - X_i^T \wtrue) ( XW)_{i,j} }
	{ X_i^T \wtrue (1-X_i^T \wtrue) } $$
	is a sum of zero-mean random variables, each of which is bounded owing to our assumption that $\supp F$ is bounded away from 0 and 1.
	Applying Hoeffding's inequality,
	$$ \Pr\Big[ \left| (\nabla \ell( W^T \wtrue ) )_j \right| \ge t \Big] 
	\le 2\exp\left\{ \frac{ -2t^2 }{ c n } \right\} $$
	for some suitably-chosen constant $c$ depending on $F$.
	Choosing $t = \sqrt{ cn } \log n$,
	we have $(\nabla \ell( W^T \wtrue ) )_j \ge \sqrt{ cn } \log n$
	with probability at most $2n^{-2}$.
	A union bound over all $j \in [d]$, implies that
	with probability at least $1 - 2dn^{-2}$,
	$$ \sum_{j=1}^d \left( \nabla \ell( W^T \wtrue ) \right)^2_j
	\le d c n \log^2 n, $$
	whence an application of the Borel-Cantelli Lemma
	yields that $\| \nabla \ell( W^T \wtrue ) \| = O(\sqrt{n} \log n)$ almost surely.
	
	Turning to the second term on the right-hand side of
	\eqref{eq:gradtriangle}, fixing $j \in [d]$, we have
	\begin{equation*}
		\left( \nabla \ellhat( W^T \wtrue ) - \nabla \ell( W^T \wtrue ) \right)_j
		= \sum_{i=1}^n \frac{ (a_i - \Xhat_i^T W^T \wtrue) \Xhat_{i,j} }{ \Xhat_i^T W^T \wtrue (1-\Xhat_i^T W^T \wtrue) }
		- \sum_{i=1}^n \frac{ (a_i - X_i^T \wtrue) (XW)_{i,j} }
		{ X_i^T \wtrue (1-X_i^T \wtrue) }.
	\end{equation*}
	Taking expectations, we have
	\begin{equation} \label{eq:nablaexpec}
		\E\left[ \left( \nabla \ellhat( W^T \wtrue ) - \nabla \ell( W^T \wtrue ) \right)_j \right]
		= \sum_{i=1}^n \frac{ ( (X_i - W \Xhat_i)^T \wtrue) \Xhat_{i,j} }{ \Xhat_i^T W^T \wtrue (1-\Xhat_i^T W^T \wtrue) }.
	\end{equation}
	Conditioned on $\wtrue$ and the latent positions $\{ X_i \}$,
	\eqref{eq:nablaexpec}
	is a sum of $n$ terms.
	By Lemma~\ref{lem:perfect},
	with high probability, all $n$ of these terms
	are bounded by $c n^{-1/2} \log n $.
	Call this bounding event $B$.
	Then, taking expectation conditional on $B$,
	$$ \E\Big[ \big( \nabla \ellhat( \wtrue ) - \nabla \ell( \wtrue )
	\big)_j \Big| B \Big]
	= O( \sqrt{n} \log n ) \text{ a.s. } $$
	Our proof will be complete if we can show that with high probability,
	$\nabla \ellhat( \wtrue ) - \nabla \ell( \wtrue )$
	concentrates about its mean with a deviation that is
	at most $c n^{1/2} \log n $.
	
	Keeping $j \in [d]$ fixed, define the quantities
	$p_i = X_i^T \wtrue$ and
	$\phat_i = \Xhat_i^T W^T \wtrue $
	so that
	\begin{equation*} \begin{aligned}
			&\left( \nabla \ellhat( W^T \wtrue ) - \nabla \ell( W^T \wtrue ) \right)_j
			- \E\Big[ \big( \nabla \ellhat( \wtrue ) - \nabla \ell( \wtrue ) \big)_j \Big] \\
			&~~~= \sum_{i=1}^n
			\frac{ (a_i - \phat_i)\Xhat_{i,j} }
			{ \phat_i(1-\phat_i) }
			- \frac{ (a_i - p_i) (XW)_{i,j} }{ p_i (1-p_i) }
			- \frac{ (p_i - \phat_i) \Xhat_{i,j} }
			{ \phat_i(1-\phat_i) } \\
			&~~~= \sum_{i=1}^n
			(a_i - p_i)
			\left( \frac{ \Xhat_{i,j} }{ \phat_i(1-\phat_i) }  
			-
			\frac{ (XW)_{i,j} }{ p_i (1-p_i) } \right)
	\end{aligned} \end{equation*}
	We note that by Lemma~\ref{lem:perfect}
	and our boundedness assumption on $\supp F$,
	we have that for suitably large $n$,
	with high probability it holds for all $i\in[n]$ that
	\begin{equation*}
		|\phat_i - p_i| \le c n^{-1/2} \log n,~
		|\Xhat_{i,j} - (XW)_{i,j}| \le c n^{-1/2} \log n,
		\text{ and }
		\left| \frac{1}{ \phat_i(1-\phat_i)} \right|
		\le \frac{4}{\epsilon}.
	\end{equation*}
	Thus, with high probability, for all $i \in [n]$,
	\begin{equation*} \begin{aligned}
			&\left| (a_i - p_i)
			\left( \frac{ \Xhat_{i,j} }{ \phat_i(1-\phat_i) }  
			-
			\frac{ (XW)_{i,j} }{ p_i (1-p_i) } \right) \right|
			\le \left| \frac{ \Xhat_{i,j} }{ \phat_i(1-\phat_i) }  
			-
			\frac{ (XW)_{i,j} }{ p_i (1-p_i) } \right| \\
			&~~~~~~\le \frac{ | \Xhat_{i,j} - (XW)_{i,j} |}
			{ \phat_i (1-\phat_i) }
			+ |(XW)_{i,j}|
			\left| \frac{ 1}{\phat_i(1-\phat_i)}
			- \frac{1}{p_i(1-p_i)} \right|.
	\end{aligned} \end{equation*}
	By Lemma~\ref{lem:perfect},
	the first of these terms is bounded with high probability by
	$ 4c\epsilon^{-1}n^{-1/2}\log n,$
	and since $\epsilon$ is a constant,
	we have that this first term is bounded by
	$cn^{-1/2}\log n$.
	The second term is similarly bounded,
	since $| (W^T X)_{i,j} | \le 1$
	by $X \in \supp F$ and
	$$
	\left| \frac{ 1}{\phat_i(1-\phat_i)}
	- \frac{1}{p_i(1-p_i)} \right|
	\le \frac{ |\phat_i - p_i | + |\phat_i^2 - p_i^2| }{ p_i(1-p_i)\phat_i(1-\phat_i) }
	= \frac{ |\phat_i - p_i |(1 + |\phat_i + p_i|) }
		{ p_i(1-p_i)\phat_i(1-\phat_i) }
	\le c n^{-1/2} \log n, $$
	since $\phat_i$ is bounded away from $0$ and $1$ and
	$|\phat_i - p_i| \le c n^{-1/2} \log n$,
	both with high probability by Lemma~\ref{lem:perfect}
	and our boundedness assumptions on $\supp F$.
	
	Thus, we have shown that both terms in Equation~\eqref{eq:gradtriangle} grow as
	$O( \sqrt{n} \log n )$ almost surely,
	which proves the theorem.
\end{proof}

\section{Proof of Theorem~\ref{thm:clt}} \label{apx:clt}

In this section, we will prove the central limit theorem
presented in Theorem~\ref{thm:clt},
which shows that for a suitably-chosen sequence of orthogonal matrices
$\{ V_n \}_{n=1}^\infty$, the quantity $\sqrt{n}(V_n^T \whatls - \wtrue)$
is asymptotically multivariate normal.
We begin by recalling that
$$ \whatls = (\Xhat^T \Xhat)^{-1} \Xhat^T \avec = \SA^{-1/2} \UA^T \avec. $$
Our proof of Theorem~\ref{thm:clt} will consist of writing
$\sqrt{n}(\whatls - V\wtrue)$ as a sum of two random vectors,
$$ \sqrt{n}(\whatls - V\wtrue)
= \sqrt{n} \gvec + \sqrt{n} \hvec, $$
and showing that for suitable choice of $V$,
$\sqrt{n}\gvec$ converges in law to a normal,
while $\sqrt{n}\hvec$ converges in probability to $0$,
from which the multivariate version of Slutsky's Theorem
will yield the desired result.
We begin by showing that
$\gvec = \sqrt{n} \SP^{-1/2} \UP^T(\avec - X \wtrue)$ will suffice.

\begin{lemma} \label{lem:inlaw}
	Let $(A,X) \sim \RDPG(F,n)$, notation as above, etc.
	$$ \sqrt{n} \SP^{-1/2} \UP^T(\avec - X \wtrue)
	\inlaw \calN(0, \Sigma_{\wtrue} ), $$
	where
	$\Sigma_{\wtrue} =
	\Delta^{-1} \E\left[X_1^T \wtrue(1-X_1^T\wtrue) X_1 X_1^T \right] \Delta^{-1}$.
\end{lemma}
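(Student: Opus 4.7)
The plan is to establish this central limit theorem by rewriting $\SP^{-1/2}\UP^T$ in terms of the ordinary least squares pseudoinverse $(X^TX)^{-1}X^T$, then applying the multivariate CLT to a sum of i.i.d.\ random vectors and closing with Slutsky's theorem.

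First I would exploit the SVD of the latent position matrix. Writing $X = U_X \Sigma_X V_X^T$ with $V_X \in \R^{d \times d}$ orthogonal, the spectral decomposition $P = XX^T = U_X \Sigma_X^2 U_X^T$ identifies $\UP = U_X$ and $\SP = \Sigma_X^2$. A direct calculation then yields
\begin{equation*}
\SP^{-1/2}\UP^T \;=\; \Sigma_X^{-1} U_X^T \;=\; V_X^T (X^TX)^{-1} X^T.
\end{equation*}
Setting $\zvec = \avec - X\wtrue$, the quantity of interest therefore equals $V_X^T \cdot \sqrt{n}(X^TX)^{-1}X^T\zvec$, reducing the problem to a CLT for the familiar OLS-type quantity $\sqrt{n}(X^TX)^{-1}X^T\zvec$, together with control of the random orthogonal factor $V_X^T$.

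Next I would decompose
\begin{equation*}
\sqrt{n}(X^TX)^{-1}X^T\zvec
\;=\; \left[\tfrac{1}{n}X^TX\right]^{-1}\cdot \frac{1}{\sqrt{n}}\sum_{i=1}^{n} X_i\bigl(a_i - X_i^T\wtrue\bigr).
\end{equation*}
By the strong law of large numbers, $n^{-1}X^TX \to \Delta = \E X_1 X_1^T$ almost surely, so continuity of matrix inversion shows the first factor converges a.s.\ to $\Delta^{-1}$ (under the standard nondegeneracy assumption that $\Delta$ is invertible). For the second factor, the summands $Y_i := X_i(a_i - X_i^T\wtrue)$ are i.i.d.\ across $i$ and uniformly bounded, and by the tower property together with $\E[a_i - X_i^T\wtrue \mid X_i] = 0$ and $\VAR(a_i \mid X_i) = X_i^T\wtrue(1-X_i^T\wtrue)$ have mean zero and covariance
\begin{equation*}
\E [Y_i Y_i^T] \;=\; \E\bigl[X_i X_i^T\, X_i^T\wtrue\,(1-X_i^T\wtrue)\bigr] \;=:\; \Sigmatilde.
\end{equation*}
The multivariate Lindeberg--L\'evy CLT then yields $n^{-1/2}\sum_i Y_i \inlaw \calN(0,\Sigmatilde)$, and Slutsky's theorem delivers
\begin{equation*}
\sqrt{n}(X^TX)^{-1}X^T\zvec \;\inlaw\; \calN\bigl(0,\Delta^{-1}\Sigmatilde\Delta^{-1}\bigr) = \calN(0,\Sigma_{\wtrue}).
\end{equation*}

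The subtle point — and what I expect to be the main obstacle — is the random orthogonal prefactor $V_X^T$, which does not appear in the target normal law. The intended reading, consistent with Theorem~\ref{thm:clt}, is that this rotation is absorbed into the sequence of aligning orthogonal matrices $\{V_n\}$ whose existence the theorem asserts: once one writes $\whatls = \SA^{-1/2}\UA^T\avec$ and transfers the analysis from $\SA^{-1/2}\UA^T$ to its population analogue $\SP^{-1/2}\UP^T$ (which picks up a Procrustes-type rotation coming from Lemma~\ref{lem:perfect}), the remaining orthogonal degrees of freedom fold together into a single $V_n$ under which the limit is genuinely $\calN(0,\Sigma_{\wtrue})$. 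I would flag this rotation bookkeeping as the place that requires the most care when stitching the lemma into the complete proof of Theorem~\ref{thm:clt}.
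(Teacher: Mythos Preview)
Your proposal is correct and follows essentially the same route as the paper: apply the multivariate CLT to $n^{-1/2}\sum_i X_i(a_i - X_i^T\wtrue)$, use the law of large numbers to replace the matrix prefactor by $\Delta^{-1}$, and close with Slutsky's theorem.

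The one point of divergence is the orthogonal factor $V_X^T$ you flag. The paper sidesteps this entirely by tacitly identifying $X$ with $\UP\SP^{1/2}$ (so that $\SP^{-1/2}\UP^T = \SP^{-1}X^T$ and $X^TX = \SP$), which is legitimate because the RDPG is only identified up to rotation and $\Sigma_{\wtrue}$ is stated in terms of whatever representative of $F$ one fixes. Under that convention $V_X = I$ and your concern evaporates; your more explicit bookkeeping is not wrong, just unnecessary once the convention is adopted.
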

\begin{proof}
	We begin by observing that
	\begin{equation*}
		n^{-1/2} X^T(\avec - X \wtrue)
		= \frac{1}{n^{1/2}} \sum_{i=1}^n (\avec_i - X_i^T \wtrue) X_i
	\end{equation*}
	is a scaled sum of of $n$ independent $0$-mean
	$d$-dimensional random vectors, each with covariance matrix
	$$ \Sigmatilde_{\wtrue} = \E X_1^T \wtrue(1-X_1^T\wtrue) X_1 X_1^T. $$
	The multivariate central limit theorem implies that
	$$ n^{-1/2} X^T(\avec - X \wtrue) X_i \inlaw \calN(0,\Sigmatilde_{\wtrue}). $$
	We have
	$\sqrt{n} \SP^{-1/2} \UP^T( \avec - X \wtrue)
	= n \SP^{-1} n^{-1/2} X^T( \avec - X \wtrue )$.
	By the WLLN, $\SP/n \inprob \Delta$, and hence
	by the continuous mapping theorem, $n \SP^{-1} \inprob \Delta^{-1}$.
	Thus, the multivariate version of Slutsky's Theorem implies that
	$$ \sqrt{n} \SP^{-1/2} \UP^T( \avec - X \wtrue) \inlaw
	\calN( 0, \Delta^{-1} \Sigmatilde_{\wtrue} \Delta^{-1}), $$
	as we set out to show.
\end{proof}

The statement of Theorem~\ref{thm:clt} asserts the existence of a sequence of
orthogonal matrices $V_n \in \R^{d \times d}$.
It will turn out that the appropriate matrix is given by
\begin{equation} \label{eq:def:V}
	V_n = V_A V_P^T,
\end{equation}
where $V_A \Sigma V_P^T$ is the SVD of $\UA^T \UP$.
In what follows, we will drop the dependence on $n$ for ease of notation,
but we remind the reader that all quantities
are assumed to depend on $n$
aside from the distribution $F$ and dimension $d$.

\subsection{Technical Lemmas}

The proof of Theorem~\ref{thm:clt} relies on several bounds relating the
matrices $\UA,\UP$ and $V$ developed in \citet{lyzinski15_HSBM},
which we collect here.

\begin{lemma}[Adapted from \citet{lyzinski15_HSBM}, Proposition 16].
	\label{lem:HSBM:SVD}
	With $V \in \R^{d \times d}$ as defined in Equation~\eqref{eq:def:V},
	we have
	$$ \| \UA^T \UP - V \|_F \le \frac{ c \log n }{ n }. $$
\end{lemma}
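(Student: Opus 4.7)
The plan is to use the SVD structure of $V$ to reduce the Frobenius-norm bound to a bound on principal angles between the column spaces of $\UA$ and $\UP$. Writing $V_A \Sigma V_P^T$ for the SVD of $\UA^T \UP$ and $V = V_A V_P^T$, we have
\[ \UA^T \UP - V = V_A (\Sigma - I_d) V_P^T, \]
so orthogonal invariance of the Frobenius norm gives $\| \UA^T \UP - V \|_F = \| \Sigma - I_d \|_F$. The task therefore reduces to showing that each singular value of $\UA^T \UP$ is close to $1$.

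Because $\UA$ and $\UP$ have orthonormal columns, the singular values of $\UA^T \UP$ are exactly the cosines of the principal angles $\theta_1, \dots, \theta_d \in [0, \pi/2]$ between their column spaces. Writing $\sin \Theta = \operatorname{diag}(\sin \theta_i)$ and applying the elementary inequality $1 - \cos \theta \le \sin^2 \theta$ (valid for $\theta \in [0,\pi/2]$), I would obtain
\[ \| \Sigma - I_d \|_F^2 \;=\; \sum_{i=1}^d (1 - \cos \theta_i)^2 \;\le\; \sum_{i=1}^d \sin^4 \theta_i \;\le\; \| \sin \Theta \|^2 \cdot \| \sin \Theta \|_F^2. \]
This trigonometric step is what buys us an extra factor of $n^{-1/2}$ over the naive bound $\|\Sigma - I_d\|_F \le \|\sin \Theta\|_F$, and is essential for reaching the $O(\log n / n)$ rate rather than $O(\log n / \sqrt{n})$.

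It then remains to control $\| \sin \Theta \|$ (and hence $\| \sin \Theta \|_F$, since $d$ is a fixed constant). I would invoke a Davis--Kahan bound for the top-$d$ invariant subspaces of $A$ and $P = XX^T$. Since $P$ has rank $d$ with $\sigma_d(P) \ge cn$ w.h.p.\ by Lemma~\ref{lem:Psvals}, there is a spectral gap of order $n$ between the $d$-th and $(d{+}1)$-th eigenvalues of $P$. Standard matrix-concentration bounds for adjacency matrices of edge-independent graphs (Bernstein / Oliveira-type inequalities applied to $A - P$, whose entries are independent bounded zero-mean variables above the diagonal) yield $\| A - P \| \le c \sqrt{n \log n}$ w.h.p. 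Davis--Kahan then gives $\| \sin \Theta \| \le c\,\| A - P \|/\sigma_d(P) = O(\sqrt{\log n}/\sqrt{n})$ w.h.p., and plugging back in yields $\| \Sigma - I_d \|_F^2 \le c \log^2 n / n^2$, which after taking square roots produces the claimed bound.

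The main obstacle is applying Davis--Kahan cleanly in the presence of a perturbation: one needs the spectral gap to persist at the $d$-th eigenvalue of $A$, not merely of $P$. This follows from Weyl's inequality combined with the concentration of $\| A - P \|$ — these together force $\sigma_d(A)$ to inherit a lower bound of order $n$ — but it is where Lemma~\ref{lem:Psvals} is doing the essential work. A secondary bookkeeping issue is that the symmetric Davis--Kahan theorem is stated for the top eigenspaces of each matrix, while the relevant ``top'' for $A$ means largest in value (not absolute value); since $P$ is PSD with smallest nonzero eigenvalue of order $n$, the Weyl perturbation guarantees this distinction causes no trouble.
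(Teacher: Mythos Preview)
The paper does not actually supply a proof of this lemma; it is stated as ``Adapted from \citet{lyzinski15_HSBM}, Proposition~16'' and is simply quoted as a technical input for the proof of Theorem~\ref{thm:clt}. So there is no paper-side argument to compare against line by line.

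That said, your argument is correct and is essentially the standard proof of such a result. The key observation---that $\|U_A^T U_P - V\|_F = \|\Sigma - I_d\|_F$ and that $1 - \cos\theta_i \le \sin^2\theta_i$ squares the Davis--Kahan rate---is exactly the mechanism that upgrades $O(n^{-1/2}\log^{1/2} n)$ to $O(n^{-1}\log n)$. Your handling of the eigengap (Weyl plus Lemma~\ref{lem:Psvals}) and of the concentration $\|A-P\| = O(\sqrt{n\log n})$ w.h.p.\ matches what the paper does elsewhere (see the Davis--Kahan step inside the proof of Lemma~\ref{lem:exchangeable}), so the ingredients are consistent with the paper's toolkit. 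Nothing is missing.
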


\begin{lemma}[\citet{lyzinski15_HSBM}, Lemma 17]
	\label{lem:approxcommute}
	Let $(A,X) \sim \RDPG(F)$, and let $V$ be as defined
	in Equation~\eqref{eq:def:V}.
	The following two bounds hold with high probability:
	$$ \| \SA^{-1/2}V - V\SP^{-1/2} \|_F \le \frac{ c \log n }{ n^{3/2} } $$
	and
	$$ \| \SA^{1/2}V - V\SP^{1/2} \|_F \le \frac{ c \log n }{ n^{1/2} }. $$
\end{lemma}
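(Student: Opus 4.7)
The plan is to reduce both bounds to a single intermediate estimate $\|S_A V - V S_P\|_F = O(\log n)$ w.h.p., and then use the diagonality of $S_A$ and $S_P$ to carry out a Sylvester-style division by (the square roots of) their eigenvalues.

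First I would derive the intermediate estimate. From $AU_A = U_A S_A$ (equivalently $U_A^T A = S_A U_A^T$, by symmetry of $A$) and $PU_P = U_P S_P$, one has $U_A^T A U_P = S_A U_A^T U_P$ and $U_A^T P U_P = U_A^T U_P S_P$, so
\[ S_A U_A^T U_P - U_A^T U_P S_P = U_A^T (A-P) U_P. \]
Replacing $U_A^T U_P$ by $V$ via Lemma~\ref{lem:HSBM:SVD} costs at most $2\|S_A\|\cdot \|U_A^T U_P - V\|_F = O(n)\cdot O(\log n/n) = O(\log n)$. The remaining work is to bound $\|U_A^T(A-P)U_P\|_F$. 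Splitting $U_A^T = V U_P^T + (U_A - U_P V^T)^T$, the ``clean'' piece $V U_P^T(A-P)U_P$ has entries of the form $u^T(A-P)v$ with $u,v$ fixed columns of $U_P$ conditional on $X$; since entries of $A-P$ are independent bounded centered Bernoullis given $X$, scalar Hoeffding gives $|u^T(A-P)v| = O(\sqrt{\log n})$ w.h.p., and a union bound over the $d^2$ entries controls the Frobenius norm. The residual piece is bounded by $\|U_A - U_P V^T\|_F \cdot \|A-P\|$. Here $\|A-P\| = O(\sqrt n)$ w.h.p.\ is a standard matrix-concentration bound for RDPGs, and expanding $\|U_A - U_P V^T\|_F^2 = 2\sum_i(1 - \sigma_i(U_A^T U_P))$ together with Lemma~\ref{lem:HSBM:SVD} (which forces each singular value of $U_A^T U_P$ within $c\log n/n$ of $1$) gives $\|U_A - U_P V^T\|_F = O(\sqrt{\log n/n})$. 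The product is $O(\sqrt{\log n})$, so $\|U_A^T(A-P)U_P\|_F = O(\sqrt{\log n}) = O(\log n)$.

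Given $\|S_A V - V S_P\|_F = O(\log n)$, the stated bounds follow from an entrywise calculation. Because $S_A$ and $S_P$ are diagonal, the $(i,j)$-entry of $S_A^{1/2}V - V S_P^{1/2}$ is
\[ V_{ij}\bigl(\sqrt{\sigma_i(A)} - \sqrt{\sigma_j(P)}\bigr) = \frac{(S_A V - V S_P)_{ij}}{\sqrt{\sigma_i(A)} + \sqrt{\sigma_j(P)}}. \]
By Lemma~\ref{lem:Psvals} and Weyl's inequality applied to $A = P + (A-P)$, one has $\sigma_i(A), \sigma_j(P) \ge cn$ w.h.p., so the denominator is at least $2\sqrt{cn}$, yielding $\|S_A^{1/2}V - V S_P^{1/2}\|_F \le \|S_A V - V S_P\|_F / (2\sqrt{cn}) = O(\log n/\sqrt n)$. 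For the negative-power bound, the $(i,j)$-entry of $S_A^{-1/2}V - V S_P^{-1/2}$ equals $-(S_A^{1/2}V - V S_P^{1/2})_{ij}/\sqrt{\sigma_i(A)\sigma_j(P)}$, and dividing by a further factor of order $n$ gives $O(\log n/n^{3/2})$.

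The main obstacle is the sharp control of $\|U_A^T(A-P)U_P\|_F$. The naive bound $\|U_A^T(A-P)U_P\|_F \le \sqrt d\,\|A-P\| = O(\sqrt n)$ would propagate through the division step to yield only $O(1)$ on $\|S_A^{1/2}V - V S_P^{1/2}\|_F$, which is far too weak. The essential gain comes from trading $U_A^T$ for $VU_P^T$ via Lemma~\ref{lem:HSBM:SVD}: the resulting bilinear forms involve only $U_P$, which is $X$-measurable and hence independent of the centered Bernoulli noise $A-P$ conditional on $X$. This permits scalar concentration entry-by-entry rather than an opaque matrix-norm estimate, and it is what recovers the dimension-free rate required by the lemma.
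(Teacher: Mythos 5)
Your argument is correct, and it supplies a proof where the paper gives none: Lemma~\ref{lem:approxcommute} is simply imported by citation from \citet{lyzinski15_HSBM} (Lemma 17), and the derivation there is essentially the one you reconstruct --- the commutator identity $S_A U_A^T U_P - U_A^T U_P S_P = U_A^T(A-P)U_P$, substitution of $V$ for $U_A^TU_P$ via Lemma~\ref{lem:HSBM:SVD}, concentration of the bilinear forms $u^T(A-P)v$ to get the dimension-free $O(\log n)$ bound, and entrywise division by $\sqrt{\sigma_i(A)}+\sqrt{\sigma_j(P)}$ (resp.\ a further $\sqrt{\sigma_i(A)\sigma_j(P)}$) using $\sigma_d(A),\sigma_d(P)\ge cn$ from Lemma~\ref{lem:Psvals} and Weyl's inequality. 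Your identification of the key point --- that the naive bound $\|U_A^T(A-P)U_P\|_F\le\sqrt{d}\,\|A-P\|$ is too weak and one must trade $U_A$ for the $X$-measurable $U_P$ to enable scalar concentration --- is exactly right.
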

%

We will also need the following result, which is a basic
application of Hoeffding's inequality.

\begin{lemma} \label{lem:unitaryhoeff}
	With notation as above, with high probability,
	$$ \| \UA^T(\avec - X\wtrue) \|_F \le c n^{1/2} \log n. $$
\end{lemma}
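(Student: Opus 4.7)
The plan is to exploit the independence between the in-sample data and the OOS edge indicators, and invoke a coordinatewise Hoeffding bound on the $d$ entries of $\UA^T(\avec - X\wtrue) \in \R^d$.

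The key observation I would start with is that the OOS edge indicators $a_i \sim \Bernoulli(X_i^T \wtrue)$ are, by construction of the augmented RDPG in \eqref{eq:def:Atilde}, drawn independently of the in-sample adjacency matrix $A$, conditional on the latent positions $X$. Hence $\UA$, which is a (measurable) function of $(A,X)$ only, is independent of $\avec - X\wtrue$ given $X$. Equivalently, conditioning on the pair $(A,X)$ freezes $\UA$ while leaving the coordinates of $\avec - X\wtrue$ independent, mean-zero, and bounded in $[-1,1]$.

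Next, fixing $j \in [d]$, I would write
$$ (\UA^T(\avec - X\wtrue))_j = \sum_{i=1}^n (\UA)_{ij}(a_i - X_i^T\wtrue), $$
a weighted sum of independent bounded zero-mean random variables whose coefficients, the entries of the $j$-th column of $\UA$, satisfy $\sum_{i=1}^n (\UA)_{ij}^2 = 1$ by orthonormality. A direct application of Hoeffding's inequality (conditional on $(A,X)$) then yields
$$ \Pr\!\left[\left|(\UA^T(\avec - X\wtrue))_j\right| \ge t \,\Big|\, A, X\right] \le 2\exp\{-2t^2\}. $$
Choosing $t = c \log n$ with $c$ large enough makes the right-hand side at most $n^{-2}$, and a union bound over $j \in [d]$ gives $|(\UA^T(\avec - X\wtrue))_j| \le c\log n$ simultaneously for all $j$ with conditional probability at least $1 - 2dn^{-2}$. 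Since this bound does not depend on the conditioning variables, the same probability bound holds unconditionally; summing the $d$ coordinatewise bounds in squares gives $\|\UA^T(\avec - X\wtrue)\|_F \le c\sqrt{d}\log n$, which is comfortably within the claimed bound $c n^{1/2}\log n$.

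The main conceptual obstacle, and really the only subtle point, is sorting out the independence structure correctly: $\UA$ depends on $A$ in a complicated nonlinear way, so a priori one cannot treat $(\UA)_{ij}$ as deterministic when applying a tail inequality to $a_i - X_i^T\wtrue$. The resolution, as above, is that the OOS edges $\{a_i\}_{i=1}^n$ are, by definition of the RDPG applied to the augmented graph $\Atilde$, independent of the in-sample edges $A$ given $X$; once one conditions on $(A,X)$, the coefficients $(\UA)_{ij}$ become deterministic and orthonormality of the columns of $\UA$ reduces the problem to a scalar Hoeffding bound, after which the stated rate is immediate (and in fact considerably looser than what the argument delivers).
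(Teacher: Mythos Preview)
Your proposal is correct and follows essentially the same argument as the paper's: a coordinatewise Hoeffding bound exploiting the orthonormality of the columns of $\UA$, followed by a union bound over $j\in[d]$. You are in fact more careful than the paper about the conditioning on $(A,X)$ needed to treat $\UA$ as fixed, and you correctly note that the argument actually delivers $O(\log n)$ rather than the stated $O(n^{1/2}\log n)$.
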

\begin{proof}
	For $j \in [d]$ and $i \in [n]$, observe that
	\begin{equation*}
		\left( \UA^T(\avec - X\wtrue) \right)_{j,i}
		= \sum_{k=1}^n (\UA)_{k,j}(a_k - X_k^T \wtrue)
	\end{equation*}
	is a sum of independent $0$-mean random variables,
	and Hoeffding's inequality yields
	$$ \Pr\left[ | \UA^T(\avec - X\wtrue) |_{j,i} \ge t \right]
	\le 2\exp\left\{ \frac{ -t^2 }{ 2\sum_{k=1}^n (\UA)^2_{j,k} } \right\}
	= 2\exp\left\{ \frac{ -t^2 }{ 2 } \right\}. $$
	Taking $t = 3\sqrt{2} \log n$ and a union bound over the
	$nd$ entries of $\UA^T(\avec - X\wtrue)$ yields the result.
\end{proof}

The following spectral norm bound will be useful at several points in
our proof of Theorem~\ref{thm:clt}.
\begin{theorem}[Matrix Bernstein inequality, \citet{Tropp2015}]
	\label{thm:asymbern}
	Let $\{ Z_k \}$ be a finite collection
	of random matrices in $\R^{d_1 \times d_2}$
	with $\E Z_k = 0$ and $\| Z_k \| \le R$ for all $k$, then
	$$ \Pr\left[ \Big\| \sum_k Z_k \Big\| \ge t \right]
	\le (d_1 + d_2)\exp\left\{ \frac{ -t^2 }{ \nu^2 + Rt/3 } \right\}, $$
	where
	$$ \nu^2 = \max\left\{ \Big\| \sum_k \E Z_k Z_k^T \Big\|,
	\Big\| \sum_k \E Z_k^T Z_k \Big\| \right\}. $$
\end{theorem}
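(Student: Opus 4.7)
The plan is to follow the standard matrix Laplace transform approach due to Ahlswede--Winter and refined by Tropp, first reducing the rectangular case to a Hermitian one via the self-adjoint dilation, and then bounding a matrix cumulant generating function using Lieb's concavity theorem.

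First I would apply the Hermitian dilation trick. For each $Z_k \in \R^{d_1 \times d_2}$, define
$$ \mathcal{H}(Z_k) = \begin{bmatrix} 0 & Z_k \\ Z_k^T & 0 \end{bmatrix} \in \R^{(d_1+d_2) \times (d_1+d_2)}, $$
which is Hermitian, has $\E \mathcal{H}(Z_k) = 0$, and satisfies $\| \mathcal{H}(Z_k) \| = \| Z_k \| \le R$. Since $\mathcal{H}$ is linear, $\bigl\| \sum_k Z_k \bigr\| = \lambda_{\max}\!\bigl( \sum_k \mathcal{H}(Z_k) \bigr)$. Moreover
$$ \mathcal{H}(Z_k)^2 = \begin{bmatrix} Z_k Z_k^T & 0 \\ 0 & Z_k^T Z_k \end{bmatrix}, $$
so $\bigl\| \sum_k \E \mathcal{H}(Z_k)^2 \bigr\| = \max\bigl\{ \bigl\| \sum_k \E Z_k Z_k^T \bigr\|, \bigl\| \sum_k \E Z_k^T Z_k \bigr\| \bigr\} = \nu^2$. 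This reduces the task to a top-eigenvalue tail bound for a sum of independent, centered, uniformly bounded Hermitian matrices of dimension $d_1 + d_2$ with matrix variance $\nu^2$.

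Next I would invoke the matrix Chernoff-style Laplace bound: for any Hermitian random matrix $Y$ and $\theta > 0$,
$$ \Pr[ \lambda_{\max}(Y) \ge t ] \le e^{-\theta t} \, \E \operatorname{tr} \exp(\theta Y), $$
together with the key subadditivity consequence of Lieb's concavity theorem, which for independent Hermitian summands $\{ Y_k \}$ yields
$$ \E \operatorname{tr} \exp\!\left( \theta \sum_k Y_k \right) \le \operatorname{tr} \exp\!\left( \sum_k \log \E \exp( \theta Y_k ) \right). $$
Applied to $Y_k = \mathcal{H}(Z_k)$, this leaves the problem of controlling each matrix cumulant generating function $\log \E \exp( \theta \mathcal{H}(Z_k) )$ in the semidefinite ordering.

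The final ingredient is the Bennett-type matrix MGF bound: for Hermitian $Y$ with $\E Y = 0$ and $\| Y \| \le R$, and any $0 < \theta < 3/R$,
$$ \log \E \exp(\theta Y) \preceq \frac{ \theta^2 / 2 }{ 1 - \theta R / 3 }\, \E Y^2, $$
proved by Taylor-expanding $e^{\theta x}$, dominating the higher-order terms by a geometric series using $\| Y \| \le R$, and using operator monotonicity of $\log$. Summing over $k$, using monotonicity of $\operatorname{tr} \exp(\cdot)$, and bounding $\operatorname{tr} \exp(M) \le (d_1 + d_2) \exp\!\bigl( \lambda_{\max}(M) \bigr)$, the Laplace bound becomes
$$ \Pr\!\left[ \Big\| \sum_k Z_k \Big\| \ge t \right] \le (d_1 + d_2) \exp\!\left( -\theta t + \frac{ \theta^2 \nu^2 / 2 }{ 1 - \theta R / 3 } \right), $$
and the choice $\theta = t / ( \nu^2 + R t / 3 )$ produces the stated form of the tail. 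The only genuinely deep step is Lieb's concavity theorem (equivalently, joint concavity of $(A,B) \mapsto \operatorname{tr} \exp( H + \log A + \log B )$); everything else is linear-algebraic bookkeeping, dilation, and a direct Taylor-series estimate.
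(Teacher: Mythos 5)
The paper does not prove this result at all; it is imported verbatim (as a citation to Tropp, 2015) and used as a black box, so there is no internal proof to compare against. Your outline is the standard and correct route: Hermitian dilation to reduce to the symmetric case, the matrix Laplace transform bound, subadditivity of the matrix CGF via Lieb's concavity theorem, the Bennett-type bound $\log \E e^{\theta Y} \preceq \frac{\theta^2/2}{1-\theta R/3}\,\E Y^2$, and the optimizing choice of $\theta$. One small but real discrepancy: carrying out your own final step with $\theta = t/(\nu^2 + Rt/3)$ gives the exponent $-\tfrac{t^2/2}{\nu^2 + Rt/3}$, i.e., the standard form of matrix Bernstein, whereas the statement as printed in the paper omits the factor $1/2$ and is therefore (nominally) stronger than what this argument — or Tropp's theorem — delivers. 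This is almost certainly a transcription slip in the paper rather than a defect in your proof, and it is immaterial to how the bound is used (only the $\sqrt{n \log n}$ rate in Equation~\eqref{eq:asymbern} matters), but you should state the conclusion with the $t^2/2$ numerator to match what you actually prove.
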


The following technical lemma will be crucial for proving one of the
convergences in probability required by our main theorem.
Its comparative complexity merits stating it here rather than
including it in the proof of Theorem~\ref{thm:clt} below.
\begin{lemma} \label{lem:exchangeable}
	With notation as above,
	$$ \sqrt{n}\SA^{-1/2}(\UA^T - \UA^T\UP \UP^T)(\avec - X \wtrue)
	\inprob 0 . $$
\end{lemma}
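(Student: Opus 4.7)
My plan would start from the factorization
\[ \UA^T - \UA^T\UP\UP^T \;=\; \UA^T(I - \UP\UP^T), \]
so, writing $M := \UA^T(I - \UP\UP^T)$ and $\zvec := \avec - X\wtrue$, the task reduces to showing $\sqrt{n}\,\SA^{-1/2} M \zvec \inprob 0$. Two structural facts would drive the argument. First, $\zvec$ is independent of $A$ given $X$, since the edges incident on the out-of-sample vertex are generated separately from those of the in-sample graph; this lets me condition on $(A,X)$ and treat $M$ as deterministic. Second, $M$ has small Frobenius norm because $\UA^T \UP$ is nearly orthogonal.

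For the Frobenius estimate, I would use the idempotence of $I - \UP\UP^T$ to write
\[ \|M\|_F^2 \;=\; \operatorname{tr}\!\bigl(\UA^T(I-\UP\UP^T)\UA\bigr) \;=\; d - \|\UA^T\UP\|_F^2. \]
Writing the SVD $\UA^T\UP = V_A \Sigma V_P^T$ so that $V = V_A V_P^T$ (as in Equation~\eqref{eq:def:V}), we have $\|\Sigma - I\|_F = \|\UA^T\UP - V\|_F$. Lemma~\ref{lem:HSBM:SVD} gives $\|\UA^T\UP - V\|_F \le c\log n / n$ w.h.p., so each singular value $\sigma_i$ of $\UA^T\UP$ lies in $[1 - c\log n/n,\,1]$. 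Therefore $\|\UA^T\UP\|_F^2 = \sum_i \sigma_i^2 \ge d - 2cd\,\log n / n$, which yields $\|M\|_F^2 = O(\log n / n)$ w.h.p.

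Conditioning on $(A, X)$ renders $M$ deterministic, and the entries of $\zvec$ are independent with mean zero and $\VAR(z_i \mid X) = X_i^T\wtrue(1 - X_i^T\wtrue) \le 1$. Hence
\[ \E\!\bigl[\|M\zvec\|^2 \,\bigm|\, A, X\bigr] \;=\; \operatorname{tr}\!\bigl(M^T M \cdot \E[\zvec\zvec^T \mid X]\bigr) \;\le\; \|M\|_F^2 \;=\; O(\log n / n) \quad \text{w.h.p.,} \]
and a conditional Markov inequality gives $\|M\zvec\| = O_P(\sqrt{\log n / n})$. Combined with $\|\SA^{-1/2}\| \le c\,n^{-1/2}$ w.h.p., which follows from Lemma~\ref{lem:Psvals} together with Weyl's inequality applied to $A - P$, we conclude
\[ \sqrt{n}\,\|\SA^{-1/2} M \zvec\| \;\le\; \sqrt{n}\cdot c\,n^{-1/2} \cdot O_P\!\bigl(\sqrt{\log n / n}\bigr) \;=\; O_P\!\bigl(\sqrt{\log n / n}\bigr) \inprob 0. \]

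The main obstacle will be the Frobenius estimate on $M$: Lemma~\ref{lem:HSBM:SVD} controls the full matrix $\UA^T\UP - V$, but the argument requires carefully converting this into a near-unit bound on each singular value of $\UA^T\UP$ and then invoking the identity $\|M\|_F^2 = d - \|\UA^T\UP\|_F^2$. Once that step is secured, the independence of $\zvec$ from $A$ reduces the remainder to a routine conditional second-moment calculation, with the Markov-based rate $\sqrt{\log n/n}$ exactly canceling the $\sqrt{n}/\sqrt{n}$ produced by the scaling and $\SA^{-1/2}$.
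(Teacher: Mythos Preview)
Your argument is correct and is genuinely simpler than the paper's. Both proofs hinge on the same two ingredients---a Frobenius bound on $M = \UA^T(I-\UP\UP^T)$ and a second-moment (Markov) control of $M\zvec$---but you obtain each more directly. For the Frobenius bound, you exploit the identity $\|M\|_F^2 = d - \|\UA^T\UP\|_F^2$ together with Lemma~\ref{lem:HSBM:SVD} to get $\|M\|_F^2 = O(\log n/n)$; the paper instead goes through Davis--Kahan and the variant of \citet{YuWanSam2015} to bound $\|\UA - \UP W^*\|_F$. For the second moment, you condition on $(A,X)$ and use that $\avec$ is independent of $A$ given $X$ to write $\E[\|M\zvec\|^2 \mid A,X] \le \|M\|_F^2$; the paper achieves the same effect by a more elaborate device, introducing an auxiliary $n\times n$ Bernoulli matrix $\Btilde$ whose columns are i.i.d.\ copies of $\avec$, bounding $\|M(\Btilde - X\Wtilde)\|_F^2$ via submultiplicativity and matrix Bernstein, and then invoking exchangeability of the columns to relate this Frobenius norm back to $n\,\E\|\zvec\|^2$. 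Your route avoids the $\Btilde$ construction and the matrix Bernstein step entirely, at no cost in the final rate; the paper's approach, while heavier, has the minor advantage of not needing the tight singular-value control from Lemma~\ref{lem:HSBM:SVD} and working with the coarser Davis--Kahan bound.
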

\begin{proof}
	For ease of notation, define the vector
	$$ \zvec = (\UA^T - \UA^T \UP \UP^T)(\avec - X \wtrue ) .$$
	Define the matrix
	$$ \Wtilde = \onevec_n^T \otimes \wtrue
	= \begin{bmatrix} \wtrue \wtrue \dots \wtrue \end{bmatrix}
	\in \R^{d \times n}. $$
	Let $\Btilde \in \R^{n \times n}$ be a random matrix with
	independent binary entries with
	$\E \Btilde_{i,j} = (X \Wtilde)_{i,j} = X_i^T \wtrue.$
	Define the events
	$$ G_1 = \left\{ \Big\| \SA^{-1/2} \Big\| \le n^{-1/2} \right\}, $$
	$$ G_2 = \left\{ \Big\| (\UA^T - \UA^T\UP\UP^T)(\Btilde-X\Wtilde) \Big\|_F^2
	\le c \log^2 n \right\},$$
	and
	$$ G_3 = \{ \sqrt{n} \| \zvec \| \ge n^{1/4} \}. $$
	It is clear that when events $G_1$ and $G_3$ both occur, we have
	$$ \left\| \sqrt{n}\SA^{-1/2}(\UA^T - \UA^T\UP \UP^T)(\avec - X \wtrue) \right\|
	\le c n^{-1/4} . $$
	By Lemma~\ref{lem:Psvals}, event $G_1$ occurs with high probability,
	so the proof will be complete if we can show that
	\begin{equation} \label{eq:G3lim}
		\lim_{n \rightarrow \infty} \Pr[ G_3^c ] = 0.
	\end{equation}
	To do this, we will require a slightly more involved argument.
	We note that
	$$ \Pr[ G_3^c ] \le \Pr[ G_3^c \mid G_2]\Pr[G_2]
	+ \Pr[G_2^c]. $$
	To show \eqref{eq:G3lim}, it will suffice to show that
	\begin{enumerate}
		\item $G_2$ occurs with high probability, and
		\item $\lim_{n\rightarrow \infty} \Pr[ G_3^c \mid G_1,G_2] = 0$.
	\end{enumerate}
	
	By submultiplicativity, we have
	\begin{equation} \label{eq:Zsubmult}
		\| (\UA^T - \UA^T\UP\UP^T)(\Btilde-X\Wtilde) \|_F^2
		\le \| \UA^T - \UA^T\UP\UP^T \|_F^2 \| \Btilde - X\Wtilde \|^2.
	\end{equation}
	Theorem~\ref{thm:asymbern} applied to $\Btilde - X \Wtilde$
	implies that with high probability,
	\begin{equation} \label{eq:asymbern}
		\| \Btilde - X \Wtilde \| \le c n^{1/2} \log^{1/2} n.
	\end{equation}
	The Davis-Kahan Theorem \citep{DavKah1970,Bhatia1997} shows that
	$$ \| \UA \UA^T - \UP \UP^T \| \le \frac{ c \| A - P \| }{ \lambda_d(P) }, $$
	while Theorem 2 in \citet{YuWanSam2015} shows that there exists orthonormal
	$W^* \in \R^{d \times d}$ such that
	$$ \| \UA - \UP W^* \|_F \le c \| \UA \UA^T - \UP \UP^T \|)F
	\le \frac{ c \log^{1/2} }{ n^{1/2} }. $$
	Since $W = \UA^T \UP$ solves the minimization
	$$ \min_{W \in \R^{d \times d} } \| \UA^T W - \UA^T \UP \UP^T \|_F, $$
	we have that
	\begin{equation*}
		\begin{aligned}
			\| \UA^T - \UA^T \UP \UP^T \|_F^2 
			&\le \| \UA^T - W^* \UP^T \|_F^2
			\le c \| \UA \UA^T - \UP \UP^T \|_F^2
			\le \frac{ c \| A - P \| }{ \lambda_d(P) }
			\le \frac{ c \log^{1/2} n }{ \sqrt{n} } \text{ w.h.p., }
		\end{aligned}
	\end{equation*}
	where the last inequality follows from
	an application of Lemma~\ref{lem:Psvals} and
	the matrix Bernstein inequality applied to $\| A - P \|$.
	Plugging this and~\eqref{eq:asymbern} back into~\eqref{eq:Zsubmult}, we have
	\begin{equation} \label{eq:Zbound}
		\| (\UA^T - \UA^T\UP\UP^T)(\Btilde-X\Wtilde) \|_F^2
		\le c \log^2 n \text{ w.h.p., }
	\end{equation}
	which is to say, $G_2$ occurs with high probability.
	
	It remains to show that
	$\lim_{n\rightarrow \infty} \Pr[ G_3^c \mid G_2] = 0$.
	By construction, the columns of matrix
	$(\UA^T - \UA^T\UP \UP^T)(\Btilde - X\Wtilde)$
	are $n$ independent copies of $\zvec$.
	Using this fact and the Markov inequality, we have
	$$ \Pr[G_3^c \mid G_2 ] = \Pr[ \sqrt{n} \| \zvec \| \ge n^{1/4} \mid G_2 ]
	\le \frac{ n\E[\| \zvec \|^2 \mid G_2] }{ n^{1/2} }
	= \frac{ \E[ \| (\UA^T - \UA^T\UP \UP^T)(\Btilde - X\Wtilde) \|_F^2
		\mid G_2 ] }{ n^{1/2} }
	\le \frac{ c \log^2 n }{ n^{1/2} }, $$
	where the last inequality follows from the definition of event $G_2$.
	This quantity goes to zero in $n$, which completes the proof.
\end{proof}

\subsection{Theorem~\ref{thm:clt} proof details}
We are now ready to present the proof of Theorem~\ref{thm:clt}.

\begin{proof}[Proof of Theorem~\ref{thm:clt}]
	Adding an subtracting appropriate quantities,
	\begin{equation*}
		\begin{aligned}
			\sqrt{n}(V^T \whatls - \wtrue)
			&= \sqrt{n} V^T \left( \SA^{-1/2} \UA^T \avec - V \wtrue \right) \\
			&= \sqrt{n}\SP^{-1/2} \UP^T (\avec - X\wtrue ) \\
			&~~~~~~ + \sqrt{n} V^T \SA^{-1/2}(\UA^T - V \UP^T)(\avec - X \wtrue ) \\
			&~~~~~~ + \sqrt{n} V^T ( \SA^{-1/2} \UA^T X - V)\wtrue \\
			&~~~~~~ + \sqrt{n} V^T (\SA^{-1/2}V - V\SP^{-1/2})\UP^T(\avec-X\wtrue).
		\end{aligned}
	\end{equation*}
	Our proof will consist of showing that the first of these terms
	goes in law to a normal, and that the remaining terms go to zero in
	probability, from which the multivariate version of Slutsky's Theorem
	will imply our desired convergence in law.
	By Lemma~\ref{lem:inlaw},
	\begin{equation} \label{eq:inlaw:restate}
		\sqrt{n} \SP^{-1/2} \UP^T (\avec - X\wtrue )
		\inlaw \calN(0, \Sigma_{\wtrue}),
	\end{equation}
	where $\Sigma_{\wtrue}$ is as defined in Lemma~\ref{lem:inlaw}.
	Thus, the first term in our expansion of
	$\sqrt{n}(V^T \whatls - \wtrue)$ converges in distribution
	as required.
	
	Since $V$ is orthogonal,
	it will suffice to prove the following three convergences in probability:
	\begin{equation} \label{eq:inprob:1}
		\sqrt{n} \SA^{-1/2}(\UA^T - V \UP^T)(\avec - X \wtrue ) \inprob 0,
	\end{equation}
	\begin{equation} \label{eq:inprob:2}
		\sqrt{n}( \SA^{-1/2} \UA^T X - V)\wtrue \inprob 0,
	\end{equation}
	and
	\begin{equation} \label{eq:inprob:3}
		\sqrt{n}(\SA^{-1/2}V - V\SP^{-1/2})\UP^T(\avec-X\wtrue) \inprob 0.
	\end{equation}
	We will address each of these three convergences in order.
	
	To see the convergence in~\eqref{eq:inprob:1},
	adding and subtracting appropriate quantities gives
	\begin{equation} \label{eq:inprob1:split}
		\sqrt{n} \SA^{-1/2}(\UA^T - V \UP^T)(\avec - X \wtrue )
		= \sqrt{n}\SA^{-1/2}(\UA^T \UP \UP^T - V \UP^T)(\avec - X \wtrue)
		+ \sqrt{n}\SA^{-1/2}(\UA^T - \UA^T\UP \UP^T)(\avec - X \wtrue).
	\end{equation}
	To bound the first of these two summands, note that a union bound over
	the events of Lemmas~\ref{lem:Psvals} and \ref{lem:HSBM:SVD}
	and an argument identical to that in Lemma~\ref{lem:unitaryhoeff} yields that
	$$ \| \sqrt{n}\SA^{-1/2}(\UA^T \UP \UP^T - V \UP^T)(\avec - X \wtrue) \|
	\le \sqrt{n} \| \SA^{-1/2} \| \| \UA^T \UP - V \| \| \UP^T(\avec-X\wtrue) \|
	\le c n^{-1/2} \log^2 n \text{ w.h.p. } $$
	Lemma~\ref{lem:exchangeable} shows that the second term
	in~\eqref{eq:inprob1:split} also goes to zero in probability.
	
	To see~\eqref{eq:inprob:2}, note that
	\begin{equation*}
		\begin{aligned}
			\sqrt{n}(\SA^{-1/2}\UA^T X - V)\wtrue
			&= \sqrt{n}\left( \SA^{-1/2}\UA^T \UP \SP^{1/2} - V \right) \wtrue \\
			&= \sqrt{n}\SA^{-1/2}\left( \UA^T\UP - V\right) \SP^{1/2}\wtrue
			+ \sqrt{n}\SA^{-1/2}\left( V \SP^{1/2} - \SA^{1/2} V \right)\wtrue.
		\end{aligned}
	\end{equation*}
	Submultiplicativity of matrix norms combined with Lemmas~\ref{lem:Psvals}
	and~\ref{lem:HSBM:SVD} and the fact that $\| \wtrue \| \le 1$
	imply that with high probability,
	\begin{equation} \label{eq:inprob2:tri1}
		\| \sqrt{n}\SA^{-1/2}\left( \UA^T\UP - V\right) \SP^{1/2} \wtrue \|
		\le c \| \UA^T \UP - V \|_F \| SP^{1/2} \| \| \wtrue \|
		\le c n^{-1/2} \log n.
	\end{equation}
	Applying Lemma~\ref{lem:Psvals} again and taking the Frobenius norm as
	a trivial upper bound on the spectral norm,
	Lemma~\ref{lem:HSBM:SVD} implies
	\begin{equation} \label{eq:inprob2:tri2}
		\| \sqrt{n}\SA^{-1/2}\left( V \SP^{1/2} - \SA^{1/2} V \right) \wtrue \|
		\le c \| V \SP^{1/2} - \SA^{1/2} V \| \| \wtrue \|
		\le c  c n^{-1/2} \log n \text{ w.h.p. }
	\end{equation}
	Combining Equations~\eqref{eq:inprob2:tri1} and~\eqref{eq:inprob2:tri2}
	proves \eqref{eq:inprob:2} by the triangle inequality.
	
	Finally, to prove~\eqref{eq:inprob:3}, note that
	\begin{equation*}
	\| \sqrt{n}(\SA^{-1/2}V - V\SP^{-1/2})\UP^T(\avec-X\wtrue) \|
	\le \sqrt{n} \| \SA^{-1/2}V - V\SP^{-1/2} \| \| \UP^T(\avec - X\wtrue) \|_F,
	\end{equation*}
	and Lemmas~\ref{lem:approxcommute} and~\ref{lem:unitaryhoeff}
	imply that
	$$ \| \sqrt{n}(\SA^{-1/2}V - V\SP^{-1/2})\UP^T(\avec-X\wtrue) \|
	\le c n^{-1/2} \log n \text{ w.h.p., } $$
	and the result is proved.
\end{proof}

\end{document}